\newtheorem{theorem}{\bf Theorem}[section]
\newtheorem{question}{\bf Question}[section]
\newtheorem{definition}{\bf Definition}[section]
\newtheorem{proposition}{\bf Proposition}[section]
\newtheorem{remark}{\bf Remark}[section]
\newtheorem{lemma}{\bf Lemma}[section]
\algnewcommand{\Inputs}[1]{%
  \State \textbf{Inputs:}
  \Statex \hspace*{\algorithmicindent}\parbox[t]{.8\linewidth}{\raggedright #1}
}
\algnewcommand{\Initialize}[1]{%
  \State \textbf{Initialize:}
  \Statex \hspace*{\algorithmicindent}\parbox[t]{.8\linewidth}{\raggedright #1}
}
\begin{document}
\title{Short and Wide Network Paths}

\author{Lavanya Marla$^1$, Lav~R.~Varshney$^1$, Devavrat Shah $^2$, Nirmal A. Prakash$^1$, and Michael E. Gale$^1$
\thanks{$^1$ University of Illinois at Urbana-Champaign}% <-this stops a space
\thanks{$^2$ Massachusetts Institute of Technology}% <-this stops a space
}

\maketitle

\begin{abstract}
Network flow is a powerful mathematical framework to systematically explore the relationship between structure and function in biological, social, and technological networks.
We introduce a new pipelining model of flow through networks where commodities must be transported over single paths rather than split over several paths and recombined.  We show this notion of pipelined network flow is optimized using network paths that are both short and wide, and develop efficient algorithms to compute such paths for given pairs of nodes and for all-pairs.  Short and wide paths are characterized for many real-world networks.

To further demonstrate the utility of this network characterization, we develop novel information-theoretic lower bounds on computation speed in nervous systems due to limitations from anatomical connectivity and physical noise.
For the nematode \emph{Caenorhabditis elegans}, we find these bounds are predictive of biological timescales of behavior.  Further, we find the particular \emph{C.~elegans} connectome is  globally less efficient for information flow than random networks, but the hub-and-spoke architecture of functional subcircuits is optimal under constraint on number of synapses.  This suggests functional subcircuits are a primary organizational principle of this small invertebrate nervous system.
\end{abstract}

\section{Introduction}
\label{sec:intro}
In studying complex systems via the interconnection of their elements, network science has emerged in the last two decades as an insightful approach for understanding collective behavior in brains, societies, and physical infrastructures.  Common network science analysis techniques draw on dynamical systems theory \cite{WattsS1998, Newman2010} and many universal properties of disparate networks have been found \cite{BarabasiA1999}.  Other prevalent analysis techniques are based on network flow \cite{AhujaMO1993}. 

We take the flow perspective and introduce a novel notion of network flow that arises in many biological, social, and technological networks, yet has not previously been studied. Consider a network where a commodity to be transmitted from a source node to a destination node can be split into pieces in time and sent in a pipelined fashion over (possibly) several hops using as many time slots as needed.  The commodity, however, must go over a single route rather than being split over several routes to be recombined by the destination \cite{Pollack1960,Hu1961}.  This is in contrast to the maximum capacity problem \cite{FordF1956,EliasFS1956} where  
flow between two nodes may use as many different routes as needed.  This ``circuit-switched'' model with a single route is prevalent in systems without the ability to split and recombine, e.g.\ signal flow in simple neuronal networks, message flow in social networks, and the flow of train cars in railroad networks with few engines.   

We will see that the optimal paths for the pipelined network flow problem must not only be short in terms of number of hops but also wide in terms of the bottleneck edge in the route.  That is, to maximize flow requires finding the single best route between the two nodes: the route that minimizes the weight of the maximum-weight edge in the route and yet is short in path length.  Finding all-pairs shortest paths in weighted networks can be accomplished in polynomial time using Floyd's dynamic programming algorithm \cite{Floyd1962}.  This is an optimization problem in a metric space. Finding widest paths in weighted networks can be accomplished by taking paths in a maximum spanning tree \cite{Hu1961,Pollack1960}, also in polynomial time.  This is an optimization problem in an ultrametric (non-Archimedean) space \cite{RammalTV1986}. As far as we can tell, our problem of finding paths that minimize the width-length product between two nodes has remained unstudied in the literature. We develop efficient algorithms for finding short and wide paths between two given nodes, as well as for all pairs.  As part of the development, we prove correctness and also characterize the computational complexity as polynomial time.  Depth-first search strategies that enumerate all simple paths between two nodes \cite{Black2008}, would check a factorial number of paths in the worst case.  

Efficient algorithms enable us to characterize the all-pairs distribution of short and wide paths for many complex networks.  Note that traditional notions of network diameter and  average path length are studied extensively in network science \cite{Newman2010}, but the all-pairs geodesic distance distribution of unweighted graphs of fairly arbitrary topology is also starting to be 
of interest \cite{BauckhageKH2015,MelnikG2016_arXiv}, essentially building on results for Erd\"{o}s-R\'{e}nyi random graphs \cite{BlondelGHJ2007,KatzavNBKKRB2015}.  As far as we know, this distance distribution for weighted graphs remains unstudied, as does the distribution of our short-and-wide path lengths.  In studying the short-and-wide path length distribution, we do not see universality across networks.

To demonstrate the detailed structure-function insights that pipelined flow gives, we consider neuronal networks.  Indeed with advances in experimental connectomics producing wiring diagrams of many neuronal networks, there is growing interest in informational systems theories to provide insight \cite{SpornsTK2005,Seung2011,Seung2012}.  For concreteness, we focus on hermaphrodite nematode \emph{Caenorhabditis elegans}, which 
is a standard model organism in biology \cite{Rockland1989,BonoM2005,SenguptaS2009} and has exactly $302$ neurons \cite{VarshneyCPHC2011}.
We consider three scientific questions in asking whether information transmission through the nervous system is a bottleneck that limits behavior.  (Neural efficiency hypotheses 
of intelligence also argue information flows better in the nervous systems of bright individuals 
\cite{DearyPJ2010, LeeWYWC2012}.)  
\begin{question}
Do neuronal circuits allow behaviors to happen as quickly as possible under
information flow limitations imposed by synaptic noise properties and neuronal connectivity patterns?
\end{question}
\begin{question}
Are information flow properties of neuronal networks significantly different from random graphs drawn from ensembles that match other network functionals?  That is, are networks non-random \cite{SongSRNC2005} in allowing information flow that is faster or slower than other networks?  
\end{question}
\begin{question}
Does the synaptic microarchitecture of functional subcircuits optimize information flow under constraint on number of synapses?
\end{question}

Since the exact computations performed by the nervous system are unclear, we use
general information-theoretic methods to lower bound optimal computational performance 
of a given neural circuit in terms of its physical noise and connectivity structure.  This approach is inspired by information-theoretic limits in distributed computing and control \cite{MartinsD2008,AyasoSD2010}.  If the performance of a neural circuit is close to the lower bound, then it is operating close to optimally.

We specifically consider gap junctions in \emph{C.\ elegans}, where neurons are directly
electrically connected to each other through pores in their membranes.  There can be more than one gap junction connecting two given neurons.  We, for the first time, model and compute the Shannon channel capacity of gap junctions.  Channel capacity used together with the network topology of the system in the short-and-wide path computation, and with an estimate of the informational requirements to perform computations, yields a bound on the minimum time to perform biologically plausible computations. Remarkably, when this lower bound is applied to \emph{C.\ elegans}, the result is rather close to behaviorally-observed timescales.  This suggests nematodes may be operating close to the behavioral limits imposed by physical properties of their nervous system.

In asking whether the network is non-random \cite{SongSRNC2005} in allowing
behavior that is faster or slower than other networks, we surprisingly find that the 
complete \emph{C.~elegans} connectome has greater distances and is therefore slower in supporting global information flow than random networks. Contrarily, we prove that the hub-and-spoke architecture of \emph{C.\ elegans} functional subcircuits
\cite{MacoskoPFCBCB2009,RabinowitchCS2013} optimizes computation speed under constraint on number of synapses.  As such, global information flow may not be a relevant criterion for neurobiology, at least for a small invertebrate system like \emph{C.~elegans}.  Rather, functional subcircuits may be a primary organizational principle.

\section{Pipelining Model of Information Flow}
Consider a network where a commodity is to be sent from a source to a destination in a manner that can be split into pieces in time and sent in a pipelined fashion over (possibly) several
hops using as many time slots as needed.  The commodity must go over a single route  
rather than being split over several routes to be recombined by the destination.  As an example, in \emph{C.\ elegans}, each neuron is identified by name and is different from any other neuron \cite{VarshneyCPHC2011}; computational specialization may arise from neuronal
specialization, which in turn may require specific paths for specific information.  

In this model, maximizing flow requires finding the single best route between the two nodes:
the route that minimizes the weight of the maximum-weight edge in the route and yet is short in path length.  In the context of network behavior, note that since we adopt the short-and-wide path view of information flow rather than the maximum capacity view \cite{FordF1956,EliasFS1956}, bounds on computation speed
will be governed by an appropriate notion of graph diameter rather than by notions of graph conductance \cite{AyasoSD2010}.
Since diameter provides weaker bounds than graph conductance, this is without loss of
generality.  Our notion of graph diameter is defined in the next section.

\subsection{Distance and Effective Diameter}\label{subsec:distance_definitions}
Consider the following standard definitions of graph distance for undirected, weighted graphs. 
\begin{definition} 
Let $G = (V,E)$ be a weighted graph.  Then the \emph{geodesic distance}
between nodes $s,t \in V$ is denoted $d_G(s,t)$ and is the number of edges connecting
$s$ and $t$ in the path with the smallest number of hops between them.  If there is no path connecting 
the two nodes, then $d_G(s,t) = \infty$.
\end{definition}

\begin{definition} 
Let $G = (V,E)$ be a weighted graph.  Then the \emph{weighted distance}
between nodes $s,t \in V$ is denoted $d_W(s,t)$ and is the total weight of edges connecting
$s$ and $t$, in the path with the smallest total weight between them.  If there is no path connecting 
the two nodes, then $d_W(s,t) = \infty$.
\end{definition}
%If all edge weights are $1$, then the weighted distance and geodesic distance coincide.

Another notion of distance arises from the pipelining model of flow.  We want a path between two nodes that has a small number of hops but is also such that the 
weight of the maximum-weight edge is small; we measure path length weighted by this bottleneck weight. 
\begin{definition} 
Let $G = (V,E)$ be a weighted graph.  Then the \emph{bottleneck distance}
between nodes $s,t \in V$ is denoted $d_B(s,t)$ and is the number of edges connecting $s$ and $t$,
scaled by the weight of the maximum-weight edge, in the path with the smallest
total scaled weight between them.  If there is no path connecting 
the two nodes, then $d_W(s,t) = \infty$.
\end{definition}

\begin{proposition}
\label{prop:order1}
If weights of all actual edges are $1$ or less, geodesic distance upper bounds the bottleneck distance:
\begin{equation}
d_B(s,t) \le d_G(s,t)\mbox{.}
\end{equation}
\end{proposition}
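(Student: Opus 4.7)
The plan is to exhibit a single path witnessing the bound, rather than trying to compare the two optima indirectly. Specifically, I would let $P^*$ denote any geodesic path from $s$ to $t$, i.e.\ a path achieving the minimum hop count $d_G(s,t)$, and then simply evaluate the bottleneck-scaled length of that particular path and compare with the infimum defining $d_B(s,t)$.

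First I would handle the degenerate case: if $s$ and $t$ are in different connected components, then $d_G(s,t) = \infty$ and the inequality holds trivially (the bottleneck distance, if also $\infty$, or any finite value, is at most $\infty$). So assume $s$ and $t$ are connected. Then $P^*$ exists and contains exactly $d_G(s,t)$ edges. Let $w^*$ be the maximum edge weight along $P^*$. By the hypothesis that every edge weight is at most $1$, we have $w^* \le 1$. Therefore the bottleneck-scaled length of $P^*$ is
\begin{equation}
d_G(s,t) \cdot w^* \;\le\; d_G(s,t) \cdot 1 \;=\; d_G(s,t).
\end{equation}

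Since $d_B(s,t)$ is by definition the minimum over \emph{all} $s$--$t$ paths of the corresponding hop count scaled by the maximum edge weight, and $P^*$ is one such path, we obtain
\begin{equation}
d_B(s,t) \;\le\; d_G(s,t) \cdot w^* \;\le\; d_G(s,t),
\end{equation}
which is the desired inequality.

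There is no real obstacle here: the argument is a one-line witness argument, and the only thing to be careful about is to invoke the ``weights at most $1$'' hypothesis at exactly the point where the bottleneck weight $w^*$ is bounded, and to dispense with the infinite case separately so that the inequality is interpreted correctly when no $s$--$t$ path exists. The proposition essentially records the fact that restricting the maximum edge weight makes the bottleneck-scaled length of any path no larger than its hop count, so the bottleneck optimum inherits the geodesic upper bound.
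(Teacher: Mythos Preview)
Your proof is correct and follows essentially the same approach as the paper: both take a geodesic path as a witness, bound its maximum edge weight by $1$, and conclude by comparing with the minimum defining $d_B$. The only difference is that you explicitly dispatch the disconnected case, which the paper omits.
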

\begin{proof}
Consider the path between $s$ and $t$ that governs $d_G(s,t)$.  Since the weights of all actual edges are $1$ 
or less, the maximum-weight edge weight is $1$ or less.  Hence the bottleneck weight of this path must be less
than or equal to the number of edges connecting $s$ and $t$ in that path.  Since by definition $d_B(s,t)$ 
minimizes the bottleneck weight among paths connecting $s$ and $t$, $d_B(s,t) \le d_G(s,t)$.
\end{proof}
\begin{proposition}
\label{prop:order2}
Weighted distance lower bounds the bottleneck distance:
\begin{equation}
d_B(s,t) \ge d_W(s,t)\mbox{.}
\end{equation}
\end{proposition}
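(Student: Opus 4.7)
The plan is to imitate the structure of the proof of Proposition \ref{prop:order1}, but witness the inequality using the path that attains $d_B(s,t)$ rather than the one that attains the geodesic distance. That is, I would fix an optimal bottleneck path $P^{*}$ between $s$ and $t$, let $k$ be its number of edges and $w^{*}$ be the weight of its maximum-weight edge, so that by definition $d_B(s,t) = k \cdot w^{*}$, and then compare this quantity against the total weight of $P^{*}$.

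The key step is the elementary bound $\sum_{e \in P^{*}} w(e) \le k \cdot w^{*}$, which holds because $P^{*}$ has exactly $k$ edges and each of them has weight at most $w^{*}$ by the definition of $w^{*}$ as the bottleneck weight of $P^{*}$. This step tacitly uses nonnegativity of edge weights (implicit in the flow/capacity setting), without which one could not guarantee that replacing each term by the maximum only increases the sum. Once this pointwise inequality is in place, the left-hand side is by definition an upper bound on $d_W(s,t)$ (which is the minimum total-weight path), while the right-hand side equals $d_B(s,t)$, and chaining the two inequalities gives $d_W(s,t) \le d_B(s,t)$.

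Finally, I would handle the degenerate case in which $s$ and $t$ are not connected: there is no path, both quantities equal $\infty$, and the inequality is vacuous. I do not expect any genuine obstacle here; the only thing worth being explicit about is that the comparison path is chosen to witness optimality for $d_B$ rather than for $d_W$, so the argument proceeds by ``an optimal bottleneck path is a feasible weighted path,'' dual in spirit to Proposition \ref{prop:order1} where an optimal geodesic path was shown to be a feasible bottleneck path.
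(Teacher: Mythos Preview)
Your proposal is correct and follows essentially the same argument as the paper's own proof: take the optimal bottleneck path, note that its total weight is at most its hop count times its maximum edge weight, and use that $d_W(s,t)$ minimizes over all paths including this one. The paper's version is slightly terser (omitting the disconnected case and the explicit mention of nonnegative weights), but the logic is identical.
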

\begin{proof}
Consider the path between $s$ and $t$ that governs $d_B(s,t)$.  Let $w_0$ be the maximum-weight edge weight
and $h$ the number of edges in it.  Since other weights in the path are only less than $w_0$, the total weight of 
this path is less than $hw_0$, but by definition $d_W(s,t)$ is less than or equal to this quantity.
Hence $d_B(s,t) \ge d_W(s,t)$.
\end{proof}

It is convenient for the sequel to write these distances as constrained optimization problems.  We first define a set of constraints on the decision variables $x_{ij}$ that indicate how different edges are used in optimal paths, and $N$ indicates neighborhood.
\begin{align}
& \sum_{j:(s,j)\in E} x_{ij} = 1 \label{balance_1}\\
& \sum_{j:(i,j)\in E} x_{ij} - \sum_{j:(j,i)\in E} x_{ji} = 0 \: \mbox{ for all } \: i \in N\setminus{\{s,t\}}\label{balance_2}\\
& \sum_{i:(i,t)\in E} x_{it} = 1 \label{balance_3}\\
& x_{ij} \in \{0,1\} \quad \forall \: (i,j) \in E \label{balance_4}
\end{align}
These constraints simply enforce the movement of one unit of flow from node $s$ to node $t$ and maintain flow balance at all other nodes. Integrality constraints ensure that exactly one path is chosen. These constraints are the same as for standard shortest path problems \cite{AhujaMO1993} and therefore the constraint matrix is totally unimodular. The distance expressions use the notation $w_{ij}$ for the edge weight between nodes $i$ and $j$.

\begin{equation}
d_G(s,t) = \min \sum_{(i,j) \in E} x_{ij} \mbox{ such that \eqref{balance_1}--\eqref{balance_4} hold.} \label{obj:shortest_length} 
\end{equation}
\begin{equation}
d_W(s,t)  = \min \sum_{(i,j) \in E} w_{ij} x_{ij} 
 \mbox{ such that \eqref{balance_1}--\eqref{balance_4} hold.} \label{obj:min_weight}
\end{equation}
\begin{equation}
d_B(s,t) = \min \left\lbrace \max_{(i,j)\in E} \{w_{ij}x_{ij}\} \sum_{(i,j) \in E} x_{ij} \right\rbrace \mbox{ such that \eqref{balance_1}--\eqref{balance_4} hold.} \label{obj:min_bottleneck}
\end{equation}
Notice the objective functions are nonlinear; yet we develop efficient algorithms in Section \ref{sec:algorithms}.

Any of these distance functions can be used to define the all-pairs distance distribution of a network, which is just
the empirical distribution of distances among all $\binom{n}{2}$ pairs of vertices, for a graph of size $|V| = n$.  These distance distributions
have various moments and order statistics, such as the average path length and the diameter.

\begin{definition}
The graph \emph{diameter} is
\begin{equation}
D = \max_{s,t \in V} d(s,t) \mbox{.}
\end{equation}
\end{definition}
We also define a notion of effective diameter where node pairs that are outliers in the all-pairs distance distribution
do not enter into the calculation.  Recall that the quantile function corresponding to the cumulative
distribution function (cdf) $F(\cdot)$ is 
$
Q(p) = \inf \{x \in \mathbb{R} | p \le F(x) \}
$
for a probability value $0 < p < 1$.
\begin{definition}
For a network of size $n$, let $F(x)$ be the empirical 
cdf of the distances of all $\binom{n}{2}$ distinct node pairs.  
Then the \emph{effective diameter} is:
\begin{equation}
D_{e} = Q(0.95)\mbox{.}
\end{equation}
\end{definition}
This definition is more stringent than others in the literature \cite{LeskovecKF2007}.
Of course, $D_{e} \le D$.  Moving forward, we use effective diameter
rather than diameter since it characterizes when most of a commodity
would have reached its destination.  Thresholds other than $0.95$ can be easily defined.

\subsection{Unique Property of Bottleneck Paths}
\label{sec:features}
In this subsection, we discuss a property of short-and-wide paths that is distinct from geodesic and weighted paths and that has algorithmic importance.

A key attribute of shortest paths exploited by many shortest path algorithms is the \emph{optimal substructure property}, that all subpaths of shortest paths are shortest paths \cite{AhujaMO1993}. This follows from metric structure, but for the ultrametric space induced by short-and-wide paths, the property does not hold.  We prove this through a counterexample.

Consider the network in Figure \ref{fig:example}, with the network weights $w_{ij}$ shown on the edges. For source node $A$ and destination node $H$, the geodesic, weighted, and short-and-wide paths are, respectively: $A-E-G-H$ (3 units), $A-B-D-F-H$ (1.367 units), and $A-B-D-F-H$ (or $A-B-C-F-H$, 2 units). Instead, if we compute the same paths with $A$ as the source and $K$ as the destination, path $A-E-G-H-I-J-K$ is the geodesic path (6 units) $A-B-D-F-H-I-J-K$ is the weighted path (3.367 units) and path $A-E-G-H-I-J-K$ is the short-and-wide path. Notice if we compute the short-and-wide path from $A$ to $K$, this does not guarantee that its sub-path up to node $H$ is optimal for source $A$ and  destination $H$. 

This implies short-and-wide paths do not form trees, unlike geodesic or weighted paths. Hence, classic tree-based shortest path algorithms \cite{AhujaMO1993} must be modified significantly for this setting, as we now show.

\begin{figure}
  \centering
  \includegraphics[width=3.5in]{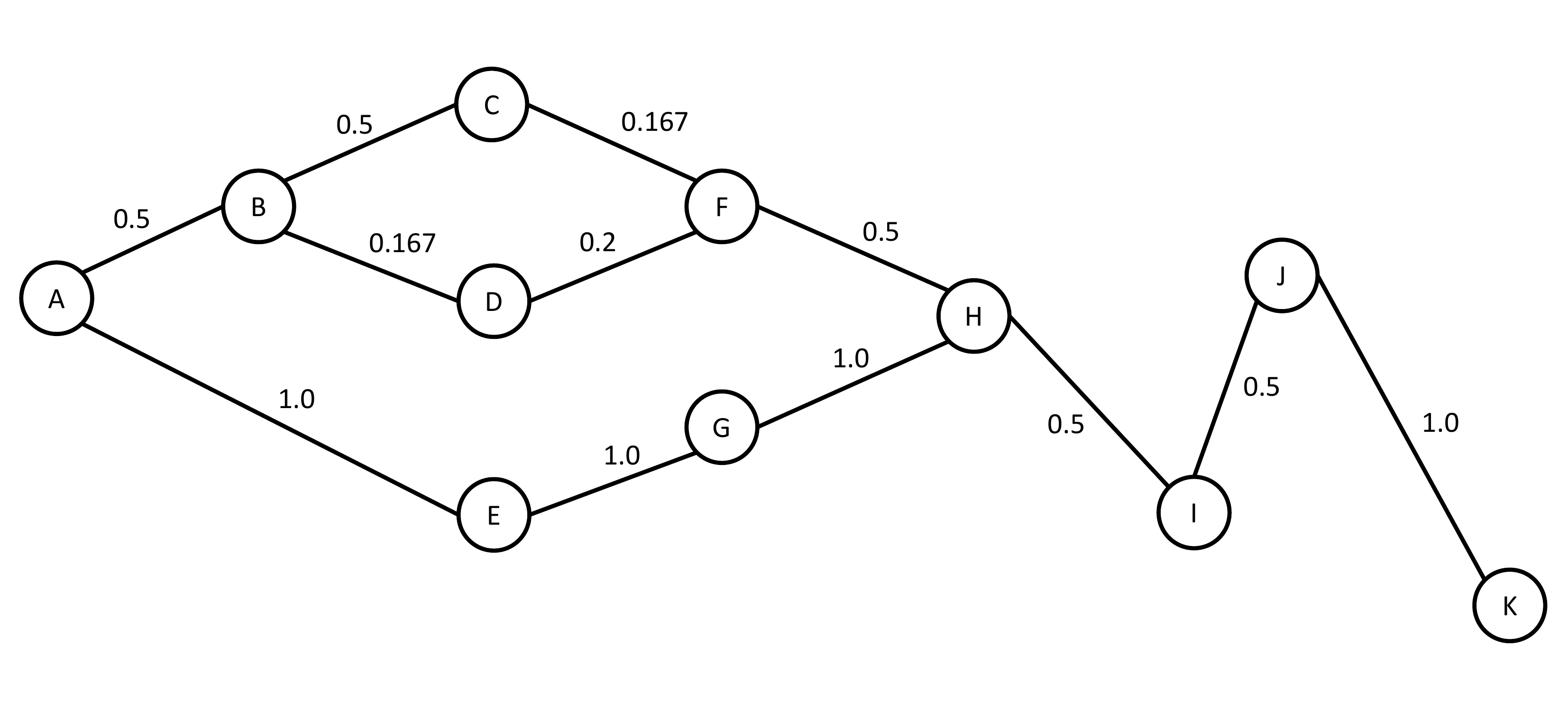}
  \caption{Network that demonstrates the optimal substructure violation for short and wide paths.}
  \label{fig:example}
\end{figure}

\section{Efficient Algorithms for Computing Short and Wide Paths}
\label{sec:algorithms}
In this section, we present algorithms for computing short-and-wide paths.

\subsection{One-to-All Bottleneck Distance Algorithm}

We present Algorithm \ref{bottleneck_dijkstra_one_all} to compute the bottleneck distance. The algorithm maintains label sets at each node $j \in V$, denoted by $L_p(j) = [p, l_p^1(j), l_p^2(j), l_p^3(j), pred_p(j), pp_p(j)]$, where the label sets include: (i) $p$, the index of the label set, (ii) $l_p^k(j)$, the value of label $k$ in label set $p$ at node $j$, $k$ = 1, 2, 3; where: (a) $l_p^1(j)$ tracks the number of edges traversed from $s$ until the current node, (b) $l_p^2(j)$ tracks the maximum width along a path from the origin until the current node, (c) $l_p^3(j) ( = l_p^1(j) \times l_p^2(j))$ tracks the product of the maximum width and the number of edges traversed from $s$ until the current node $j$; (iii) $pred_p(j)$: predecessor node for label set $p$ at node $j$, (iv) $pp_p(j)$: index of predecessor's label set for label $p$ at node $j$.  We also let  $np(j)$ denote the number of non-dominated label sets for node $j$. 

\begin{definition}\label{def:dominated_labels}
A label set $L_p(j)$ is \emph{strictly dominated} by label set $L_q(j)$ at node $j$, if $l_p^1(j) > l_q^1(j)$ and $l_p^2(j) > l_q^2(j)$ (and consequently, $l_p^3(j) > l_q^3(j)$). Label set $p$ is \emph{dominated} by label set $q$ at node $j$, if either: (a) $l_p^1(j) > l_q^1(j)$ and $l_p^2(j) = l_q^2(j)$ or (b) $l_p^1(j) = l_q^1(j)$ and $l_p^2(j) > l_q^2(j)$ (and again consequently, $l_p^3(j) > l_q^3(j)$).  A label set that is not strictly dominated or dominated, is \emph{non-dominated}. 
\end{definition}

\begin{remark} \label{remark:1}
Observe that each node $j$ can have at most $W$ non-dominated labels, where $W$ is the maximum number of discrete values of weights $w_{ij}$, over all edges $(i,j)$ in the network. This is a consequence of the fact $l_p^1(j)$ and $l_p^2(j)$ are the only two quantities being tracked (and $l_p^3(j)$ is the product of $l_p^1(j)$ and $l_p^2(j)$); and the possibilities for non-dominated labels are if  $l_p^1(j) < l_q^1(j)$ and $l_p^2(j) > l_q^2(j)$ (or vice versa) in labels $p$ and $q$ of node $j$. Also note that all non-dominated labels must be maintained since the optimal substructure property does not hold,  and each such label must be propagated to the downstream nodes because it may dominate after propagation. Since $l_p^1$ is always an integer, the number of discrete weights upper bounds the possible number of non-dominated labels that can exist at each node.
\end{remark}

\begin{lemma}\label{lemma_monotonic}
Along a given $a-b$ path, the value of the labels on the path monotonically (but not necessarily strictly monotonically) increases.
\end{lemma}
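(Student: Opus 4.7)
The plan is to argue by one-step induction along the path, appealing directly to the definitions of the three labels $l^1, l^2, l^3$ as they are propagated by Algorithm \ref{bottleneck_dijkstra_one_all}. Fix an $a$--$b$ path and let $(i,j)$ be any edge used along the path, where the label set $L_p(i)$ at $i$ is extended to a label set $L_{p'}(j)$ at $j$ via edge $(i,j)$. I will show that each of $l^1, l^2, l^3$ at $j$ is at least as large as its value at $i$.

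First I would handle $l^1$: by definition it counts the number of edges from $s$ up to the current node, so the propagation rule along $(i,j)$ gives $l^1_{p'}(j) = l^1_p(i) + 1$, a strict unit increase, hence monotonically non-decreasing. Next I would handle $l^2$: by definition it tracks the maximum edge weight encountered so far, so the propagation along $(i,j)$ gives $l^2_{p'}(j) = \max\{l^2_p(i), w_{ij}\} \ge l^2_p(i)$, which is weakly monotonic (this is where the ``not necessarily strictly'' caveat in the statement enters, since equality occurs whenever $w_{ij} \le l^2_p(i)$). Finally, since $l^1$ and $l^2$ are both non-negative and non-decreasing along the path, and $l^3 = l^1 \cdot l^2$ by definition, the product is non-decreasing as well:
\begin{equation*}
l^3_{p'}(j) = l^1_{p'}(j)\,l^2_{p'}(j) \ge l^1_p(i)\,l^2_p(i) = l^3_p(i).
\end{equation*}
Iterating this one-step argument over all consecutive edges of the $a$--$b$ path yields the claim.

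The proof is essentially a definitional unpacking, so I do not expect a substantive obstacle. The only subtle point worth flagging explicitly is that monotonicity of $l^2$ is weak rather than strict, because $l^2$ is a running maximum that may stay constant when a new edge is lighter than the current bottleneck; this is the reason the lemma is stated with ``not necessarily strictly.'' I would also briefly remark that non-negativity of the weights $w_{ij}$ is what allows the product $l^1 \cdot l^2$ to preserve monotonicity, so this assumption (implicit in the bottleneck-distance setup of Section \ref{subsec:distance_definitions}) should be mentioned when invoking the product step.
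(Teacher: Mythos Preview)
Your proposal is correct and follows essentially the same approach as the paper: both argue that the labels are non-negative and that the UpdateLabels propagation can only increase them. Your version is simply more explicit, spelling out the three inequalities for $l^1$, $l^2$, and $l^3$ that the paper leaves implicit in the phrase ``the UpdateLabels operation \ldots\ can only increase label values.''
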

\begin{proof}
Note that the values in labels $l_p^1$ and $l_p^2$ are always non-negative because the labels measure the number of edges and the max width thus far, respectively. The UpdateLabels operation defined below can only increase label values, and thus, on a given $a-b$ path (including if $a$ is the same as $b$), the label values only increase. %The label values when updated only if the condition in Line 7 of Algorithm \ref{bottleneck_dijkstra} is satisfied, by considering labels that are non-dominated. The Update operation can only increase label values, and thus, on a given $s-t$ path, as  
\end{proof}

\begin{lemma}\label{lemma_cycles}
The $s-t$ path corresponding to bottleneck distance will not contain any cycles. 
\end{lemma}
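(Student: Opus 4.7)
The plan is to prove this by contradiction: assume the optimal $s$--$t$ path $P^*$ (the one achieving $d_B(s,t)$) contains a cycle $C$, and then construct a strictly better path by removing $C$, contradicting optimality.

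First I would set up notation. Let $P^*$ be an $s$--$t$ walk achieving the minimum in \eqref{obj:min_bottleneck}, with $h^*$ edges and bottleneck weight $w^* = \max_{(i,j) \in P^*} w_{ij}$, so its objective value is $h^* w^*$. Suppose $P^*$ contains a cycle $C$ consisting of $k \ge 1$ edges, all of which lie on $P^*$ between two visits to the same vertex $v$. Form a new $s$--$t$ walk $P'$ by deleting the edges of $C$ from $P^*$ (that is, on reaching $v$ the first time, continue from $v$ as $P^*$ does after its second visit). Then $P'$ has $h' = h^* - k < h^*$ edges and its maximum edge weight satisfies $w' = \max_{(i,j) \in P'} w_{ij} \le w^*$ since $P'$ uses a subset of the edges of $P^*$.

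Next I would compare objective values. Because edge weights are positive, $h' w' \le (h^* - k) w^* < h^* w^*$, so $P'$ yields a strictly smaller objective value than $P^*$. Since $P'$ is a feasible $s$--$t$ walk (flow balance still holds, and the $x_{ij}$ indicators obtained from $P'$ by iteratively cancelling cycles clearly satisfy \eqref{balance_1}--\eqref{balance_4}), this contradicts the assumption that $P^*$ attains $d_B(s,t)$. Hence no optimal path contains a cycle.

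The main obstacle, and the only subtle point, is reconciling this argument with the integer flow formulation in \eqref{balance_1}--\eqref{balance_4}, where $x_{ij} \in \{0,1\}$ already rules out repeating an edge but not, a priori, revisiting a vertex via distinct edges (a simple cycle through $v$). The argument above handles precisely that case: the cycle-free walk $P'$ corresponds to a valid $\{0,1\}$-flow with strictly smaller objective. One should also briefly note that the degenerate case $w^* = 0$ cannot arise as long as edge weights are strictly positive, which is standard for the weighted graphs considered here; otherwise the statement would need to be phrased as ``there exists an optimal path with no cycles,'' which the same cycle-removal construction still gives.
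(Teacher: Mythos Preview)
Your proof is correct and follows essentially the same contradiction-by-cycle-removal approach as the paper: assume an optimal $s$--$t$ path contains a cycle, delete the cycle, and observe that the resulting path has strictly smaller objective value. The only cosmetic difference is that the paper appeals to Lemma~\ref{lemma_monotonic} (monotonicity of labels along a path) to justify the strict decrease, whereas you establish the inequality $h'w' \le (h^*-k)w^* < h^*w^*$ directly; your version is self-contained and also handles the integer-flow and zero-weight edge cases more carefully than the paper does.
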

\begin{proof}
Proof is by contradiction. Suppose the $s-t$ path with the bottleneck distance contains a cycle, that is the path is $s -\ldots-u-u_1 \ldots u_k-u-\ldots-t$, with cycle $u-u_1 \ldots u_k-u$. However, by Lemma \ref{lemma_monotonic}, we know that as we travel along $u-u_1\ldots u_k-u$, the value of $d_B(s,t)$ only increases. Therefore the path $s - \ldots -u- \ldots -t$ that excludes the cycle will have a smaller value of $d_B(s,t)$, contradicting that $s - \ldots -u-u_1  \ldots u_k-u-\ldots-t$ is the shortest path.
\end{proof}

\begin{theorem}
When Algorithm \ref{bottleneck_dijkstra_one_all} terminates, all nodes $i$ have the bottleneck distance from $s$ to $i$ as labels.
\end{theorem}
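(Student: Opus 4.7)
The plan is to split correctness into a soundness half (every maintained label at node $i$ corresponds to a real $s$-to-$i$ simple path with hop-count $l_p^1(i)$ and bottleneck weight $l_p^2(i)$) and a completeness half (for every simple $s$-to-$i$ path $\pi$, some non-dominated label $L_p(i)$ has $l_p^1(i)\le|\pi|$ and $l_p^2(i)\le\max_{e\in\pi}w_e$). Once both halves are in hand, the bottleneck distance is recovered as $d_B(s,i)=\min_p l_p^3(i)$, because $l_p^3 = l_p^1\cdot l_p^2$ and minimizing the product corresponds exactly to the objective in \eqref{obj:min_bottleneck}; any label that is strictly dominated or dominated in the sense of Definition \ref{def:dominated_labels} can only yield a larger product, so discarding them is safe.

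For soundness I would induct on the order in which labels are generated by UpdateLabels. The seed label at $s$ is trivially realized, and each new label at $j$ is produced by extending some existing label at an in-neighbor $i$ along edge $(i,j)$, so by the inductive hypothesis the new label continues a genuine $s$-to-$i$ path; Lemma \ref{lemma_cycles} together with Lemma \ref{lemma_monotonic} guarantees that any extension forming a cycle is immediately dominated by the cycle-free prefix and can be discarded. For completeness I would induct on the hop-count $h$ of $\pi$. The base case $h=0$ holds by the initial label at $s$. For the inductive step, write $\pi = s\to\cdots\to v\to i$ and apply the hypothesis to the $(h-1)$-hop prefix; this yields a non-dominated label $L_q(v)$ at $v$ that dominates the prefix parameters. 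The algorithm's propagation then creates a candidate at $i$ with parameters $(l_q^1(v)+1,\max\{l_q^2(v),w_{vi}\})$, and this candidate is either retained in the non-dominated label set of $i$ or already beaten by a retained label, which in either case produces a label dominating $\pi$.

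The main obstacle is precisely the failure of the optimal substructure property highlighted in Section \ref{sec:features}: I cannot imitate the classical Dijkstra argument of permanently settling nodes in increasing order of distance, because a globally optimal $s$-to-$t$ path need not use the locally optimal $s$-to-$v$ prefix. The resolution is to show that, by the time the queue empties, every non-dominated label at every node has actually been extended along every outgoing edge at least once. Here Remark \ref{remark:1} is crucial: each node carries at most $W$ non-dominated labels, so the total number of propagation events is bounded by $W|E|$, ensuring termination and giving the completeness induction a finite, well-defined stopping point. Combining the two directions then yields $\min_p l_p^3(i) = d_B(s,i)$ for every $i$, which is the claim of the theorem.
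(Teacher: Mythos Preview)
Your proposal is correct and uses the same ingredients as the paper (Remark~\ref{remark:1}, Lemma~\ref{lemma_monotonic}, Lemma~\ref{lemma_cycles}, and the fact that every non-dominated label is eventually propagated along every outgoing edge), but the organization is genuinely different. The paper's own proof is a short narrative sketch: it observes that the label sets track the three quantities, that every non-dominated label is selected in increasing $l_p^3$ order and propagated via UpdateLabels, that ConsolidateLabels caps the label count at $W$ per node, and finally asserts that therefore ``all possible paths are explored'' by appeal to the two lemmas. There is no explicit induction, no separation of directions, and no discussion of why discarding dominated labels is safe. Your soundness/completeness split with the two inductions (on label-generation order and on hop count, respectively) supplies exactly the scaffolding the paper's sketch leaves implicit, and your explicit handling of the optimal-substructure failure---arguing that what matters is not settling nodes but exhausting the finite set of at most $W|E|$ propagation events---makes rigorous what the paper only gestures at. What the paper's version buys is brevity; what yours buys is an argument that a reader could actually verify line by line.
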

\begin{proof}
The correctness of the algorithm follows because the label sets keep track of $l_p^1(i), l_p^2(i)$, and $l_p^3(i)$ for each node $i$ and each label $p$ corresponding to that node $i$. At each iteration, each non-dominated label is explored, in increasing order of $l_p^3(i)$. That is, each label set from each edge out of the selected node $i$ is propagated downstream during the UpdateLabels step. Due to the ConsolidateLabels step at each node in each iteration, a maximum of $W$ labels can be present at each node (Remark \ref{remark:1}). All non-dominated labels are explored before termination, and due to Lemma \ref{lemma_monotonic} and \ref{lemma_cycles}, all possible paths are explored, resulting in the short-and-wide path. 
\end{proof}

%{\setstretch{0.7}

\begin{algorithm}
\caption{One-to-All Bottleneck Distance}\label{bottleneck_dijkstra_one_all}
\begin{algorithmic}[1]
\Initialize {$S = \emptyset, S' = \emptyset$\\
 $l_1^k (s)= 0, \: \forall \: k = 1, 2, 3$ \\
$S = S \cup \{s\}, S' = S' \cup \{l_p^k(s)\} \: \forall \: k = 1, 2, 3, p = 1, 2$ \\ $l_p^k (j) = \infty \:; \forall \: k = 1, 2, 3, p = 1,2; np(j) = 0, p_p(j) = -1$ and $pp(j) = -1 \; \forall \: j \in N\setminus\{s\}$}\\
$V' = V' \cup \{l_p^k(j)\} \: \forall j \in V, \: \forall \: k = 1, 2, 3$ %, p = 1, 2
%V' = set of all labels at all nodes
%S' = set of all closed (finalized) labels
%S = set of closed (finalized) nodes
% Should allow a maximum of two label sets at each node j
%np(j) = number of non-dominated label sets at node j
%pred_p(j) = predecessor node for label set p at node j
%pp_p(j) = predecessor label set for label set p at node j
\While {$S \neq V$ and $S' \neq V'$} %S=set of closed nodes	
	\For {each node $j \in V \setminus S$}
		\For {each non-dominated label set $L_p(j)$ (see Definition \ref{def:dominated_labels})}
		\State $T := T \cup \{L_p(j)\}$ 
		\EndFor
	\EndFor
	\State Order label sets in $T$ in increasing order of $l_p^3(j), \forall \: p, \forall \: j$, breaking ties arbitrarily. Find label set $L_p(i) \in T$ such that $l_p^3(i) = \min_{L \in V'\setminus S'} l_q^3(j)$
	\State $T = T\setminus\{L_p(i)\}$, $S' = S \cup \{L_p(i)\}$, $S := S \cup \{i\}$
%	\State $S' := S' \cup \{l_p^k(j)\}$	
	\State \textbf{UpdateLabels($L_p(i)$):} %have to choose a particular label p at a node i
		\For {each edge $(i,j) \in E$} 
			\For {each label set $q = 1,..,np(j)$ at node $j$}
				\If {$l_q^1(j) > l_p^1(i) + 1 \wedge l_q^2(j) > max\{l_p^2(i), w_{ij}\}$}
					\State \hspace{\algorithmicindent}$pred_{q}(j) = i, pp_{q}(j) = p$
					\State \hspace{\algorithmicindent}$l_{q}^1(j) = l_{p}^{1}(i) + 1$
					\State \hspace{\algorithmicindent}$l_q^2(j) = \max \{l_p^2(i), w_{ij}\}$ 	
					\State \hspace{\algorithmicindent}$l_q^3(j) = l_q^1(j) * l_q^2(j)$ 		
				\ElsIf {$(l_q^1(j) \leq l_p^1(i) + 1 $ and $l_q^2(j) > max\{l_p^2(i), w_{ij}\}) \vee (l_q^1(j) > l_p^1(i) + 1 $ and $l_q^2(j) \leq max\{l_p^2(i), w_{ij}\})$}
					\State Create new temporary label $L_{q'}(j)$ at $j$ with $pred_{q'}(j) = i, pp_{q'}(j) = p, l_{q'}^1 = l_p^1(i) + 1, l_{q'}^2(j) = max\{l_p^2(i), w_{ij}\}, l_q^3(j) = l_{q'}^1(j) * l_{q'}^2(j)$
%				\ElsIf {$l_q^1(j) > l_p^1(i) + 1 $ and $l_q^2(j) \leq max\{l_p^2(i), w_{ij}\}$}
%					\State Create a new temporary label $L_{p'}(j)$ at $j$ with $pred_{q'}(j) = i, pp_{q'}(j) = p, l_{q'}^1 = l_p^1(i) + 1, l_{q'}^2(j) = max\{l_p^2(i), w_{ij}\}, l_q^3(j) = l_{q'}^1(j) * l_{q'}^2(j)$				
%				\Else 
%					\State If No change to labels..
				\EndIf
			\EndFor
			\State \textbf{ConsolidateLabels(j):}
			\For {all (including temporary) labels  $p$ at $i$}
				\State Delete all dominated labels (Definition \ref{def:dominated_labels}). Also combine labels $p$ and $q$ with $l_q^1(j) = l_p^1(j)$ and  $l_q^2(j) = l_p^2(j)$. Temporary label made permanent if non-dominated. Update $np(j)$.
			\EndFor
		\EndFor
%		\For {each node $i \in V \setminus S$}
%			\For {all labels $p$ at $i$}
%				\State Delete labels $q:$ $l_q^3(i) > l_p^3(i) = \min_{q} l_q^3(i)$
%			\EndFor
%		\EndFor
\EndWhile
\end{algorithmic}
\end{algorithm}
%}

Having established correctness, we also analyze the computational complexity, in terms of the number of nodes $n$, number of edges $m$, and the number of possible discrete edge weights $W$.
\begin{lemma}
\label{lemma:one_all_runningtime}
The running time of Algorithm \ref{bottleneck_dijkstra_one_all} is $O(W^2 m\log n)$, which is pseudopolynomial.
\end{lemma}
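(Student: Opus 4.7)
The plan is to analyze Algorithm \ref{bottleneck_dijkstra_one_all} in the style of a Dijkstra complexity proof, modified to account for the fact that each node carries up to $W$ non-dominated labels rather than a single tentative distance. First I would invoke Remark \ref{remark:1} to conclude that the total number of non-dominated labels maintained across all nodes is at most $Wn$, and hence the priority queue used to find $\arg\min l_q^3$ in each iteration of the outer while-loop contains at most $O(Wn)$ entries. Using a standard binary-heap implementation, each \textsc{extract-min}, insertion, or decrease-key costs $O(\log(Wn)) = O(\log n)$ provided $W$ is polynomial in $n$ (which we are entitled to assume in the pseudopolynomial regime).

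Next I would count work per extracted label. When label $L_p(i)$ is selected and removed from $T$, the \textbf{UpdateLabels} step examines every outgoing edge $(i,j) \in E$; for each such edge it compares the candidate label against each of the at most $W$ existing labels at $j$, performing $O(1)$ arithmetic per comparison and at most one heap update per candidate. The subsequent \textbf{ConsolidateLabels}$(j)$ deletes dominated labels and merges duplicates, which costs $O(W)$ per call since the label set at $j$ has size $O(W)$. Thus the total work charged to processing label $L_p(i)$ is $O(\deg(i) \cdot W \cdot \log n)$, where the $\log n$ accounts for the priority-queue updates.

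Summing over all extracted labels and using the fact that each node is extracted at most $W$ times (once per non-dominated label), the overall running time is bounded by
\begin{equation*}
\sum_{i \in V} W \cdot O\bigl(\deg(i) \cdot W \cdot \log n\bigr) \;=\; O\!\left(W^2 \log n \sum_{i \in V} \deg(i)\right) \;=\; O(W^2 m \log n),
\end{equation*}
since $\sum_i \deg(i) = 2m$. Adding the $O(Wn \log n)$ term for raw \textsc{extract-min} calls is absorbed because $n = O(m)$ in any connected instance (and disconnected components can be handled independently).

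The one place that deserves care, and what I expect to be the main obstacle, is justifying that we never do asymptotically more heap work than $O(W^2 m \log n)$ despite the fact that \emph{temporary} labels created in the $\mathbf{ElsIf}$ branch of \textbf{UpdateLabels} may later be deleted by \textbf{ConsolidateLabels}. I would handle this by charging each temporary label to the edge-label pair $((i,j), L_p(i))$ that created it; since each edge $(i,j)$ contributes at most $W$ such pairs (one per label at $i$), the total number of temporary labels ever inserted is $O(Wm)$, and together with the $O(W)$ comparisons each such insertion triggers in its destination's label set, we recover the same $O(W^2 m \log n)$ bound. Finally, pseudopolynomiality follows because $W$ is polynomial in the numeric magnitude of the edge weights rather than in $\log$ of that magnitude.
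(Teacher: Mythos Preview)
Your proposal is correct and follows essentially the same approach as the paper: treat the algorithm as a Dijkstra variant with baseline cost $O(m\log n)$ and account for the $W$-fold blowup in labels per node together with the $W$-way comparisons in \textbf{UpdateLabels}/\textbf{ConsolidateLabels} to obtain the $O(W^2 m\log n)$ bound. Your write-up is considerably more detailed than the paper's two-sentence sketch (in particular, your explicit accounting for temporary labels and the $\log(Wn)=O(\log n)$ justification are refinements the paper omits), but the underlying decomposition and the source of each factor are the same.
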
 
\begin{proof}
The algorithm is structured like Dijkstra's algorithm which has complexity $O(m \log n)$ \cite{AhujaMO1993}, but during each iteration, since the UpdateLabels step propagates from each label out of each edge and at each node, a consolidation step must be performed. With a discrete number of weights $W$ over all edges in the network, the possible labels at each node is upper bounded by $W$. Hence, the order of the algorithm is $O(W^2m\log n)$.
\end{proof}

\subsection{All-Pairs Bottleneck Distance Algorithm}
Now we consider the computation for all-pairs, rather than separately computing for each source-destination pair.  This is Algorithm \ref{bottleneck_floydwarshall}.

\begin{algorithm}[htbp]
\caption{All-to-All Bottleneck Distances}\label{bottleneck_floydwarshall}
\begin{algorithmic}[1]
\Initialize {$l^1[i,j] = \emptyset \: \forall \: i,j \in N$\\ 
			$l^2[i,j] = \emptyset \: \forall \: i,j \in N$\\
			Let $l_1^1[i,j] = 1$ if nodes $i$ and $j$ are adjacent\\
			Let $l_1^2[i,j]$ be the weight between nodes $i$ and $j$ if they are connected\\
			$np[i,j] = l_1^1[i,j] \: \forall \: i,j \in N$}
\For {all nodes $k \in N$}
	\For {all nodes $i \in N$}
		\For {all nodes $j \in N$}
			\State \textbf{Node Insertion} on label sets $L[i,k]$ and $L[k,j]$
			%Adds L[i,k] to L[k,j] and outputs best Label Set as L'[i,j]
			
			\State \textbf{Maximize Labels} on label sets $L[i,j]$ and $L'[i,j]$
			%Compares L[i,j] and L'[i,j] and outputs non-dominated Label Set as L''[i,j]. Then sets L[i,j] = L''[i,j] and deletes both temporary sets L' and L''		
		\EndFor
	\EndFor
\EndFor

\end{algorithmic}
\end{algorithm}

\begin{algorithm}[htbp]
	\caption{Node Insertion}\label{floydwarshall-insertion}
	\begin{algorithmic}[1]
\Initialize {$p = np[i,k]; q = np[k,j]$}

\While {$L_p[i,k]$ and $L_q[k,j]$ exist} %Adds L[i,k] to L[k,j] and outputs best Label Set to L'[i,j]
	\State append to front $l_p^1[i,k] + l_q^1[k,j]$ to $l'^1[i,j]$
	\State append to front $\max \{l_p^2[i,k], l_q^2[k,j]\}$ to $l'^2[i,j]$
	
	\If {$l_p^2[i,k] = l_q^2[k,i]$}
		$p = p-1; q = q-1$
	\EndIf
	\If {$l_p^2[i,k] > l_q^2[k,i]$}
		$p = p-1$
	\EndIf
	\If {$l_p^2[i,k] < l_q^2[k,i]$}
		$q = q-1$
	\EndIf
\EndWhile
\end{algorithmic}
\end{algorithm}

\begin{algorithm}
	\caption{Maximize Labels}\label{floydwarshall-max}
	\begin{algorithmic}[1]
\Initialize {$bestL^1 = \infty$ \\
	$p,q = 1$\\ 
	$np[i,j] = 0$\\
Note $L[i,j]$ and $L'[i,j]$ are sorted ascending by $l^2$}

\While {$L_p[i,k]$ and $L_q'[i,j]$ exist} %Compares L[i,j] and L'[i,j] and outputs non-dominated Label Set as L''[i,j]. Then sets L[i,j] = L''[i,j] and deletes both temporary sets L' and L''
	\If {$l_p^2[i,j] < l_q'^2[i,j]$}
		\If {$l_p^1[i,j] < bestL^1$}
			\State append $L_p[i,j]$ to $L''[i,j]$
			\State $bestL^1 = l_p^1[i,j]$
		\EndIf
		\State $p = p + 1$
	\ElsIf {$l_p^2[i,j] > l_q'^2[i,j]$}
		\If {$l_q'^1[i,j] < bestL^1$}
			\State append $L_q'[i,j]$ to $L''[i,j]$
			\State $bestL^1 = l_q'^1[i,j]$
		\EndIf
		\State $q = q + 1$
	\ElsIf {$l_p^2[i,j] = l_q'^2[i,j]$}
		\If {$l_p^1[i,j] <= l_q'^1[i,j] \wedge l_p^1[i,j] < bestL^1$}
			\State append $L_p[i,j]$ to $L''[i,j]$
			\State $bestL^1 = l_p^1[i,j]$
		\ElsIf {$l_p^1[i,j] > l_q'^1[i,j] \wedge l_q'^1[i,j] < bestL^1$}
			\State append $L_q'[i,j]$ to $L''[i,j]$
			\State $bestL^1 = l_q'^1[i,j]$
		\EndIf
		\State $p = p + 1$
		\State $q = q + 1$
	\EndIf
\EndWhile
\For {each remaining node in $L[i,j]$ or $L'[i,j]$}
	\If {$l^1[i,j] < bestL^1$} append to $L''[i,j]$
	\EndIf
\EndFor
\State $L[i,j] = L''[i,j]$
\State Delete $L''[i,j]$ and $L'[i,j]$
\State update $np[i,j]$
	
	\end{algorithmic}
\end{algorithm}

\begin{theorem}\label{theorem_fw}
When Algorithm \ref{bottleneck_floydwarshall} terminates, all edges $(i,j)$ have  bottleneck distance from $i$ to $j$ as labels.
\end{theorem}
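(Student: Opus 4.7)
My plan is to mirror the standard Floyd--Warshall correctness argument, but adapted to the Pareto-frontier structure that short-and-wide paths require. I would induct on the outer loop variable $k$, using the invariant: after the $k$-th outer iteration, for every pair $(i,j)$, the stored label set $L[i,j]$ equals the Pareto frontier of $(l^1, l^2)$ pairs (hop count and maximum edge weight) over all simple $i$--$j$ paths whose intermediate vertices lie in $\{1,\ldots,k\}$. On such a frontier, larger $l^2$ is necessarily paired with smaller $l^1$, so the labels are simultaneously sorted ascending in $l^2$ and descending in $l^1$, which is exactly the sorted form that both Node Insertion and Maximize Labels assume on their inputs. The base case $k=0$ is immediate: the initialization places the single direct edge, if it exists, as the only label, which is trivially the Pareto frontier over paths with no intermediate vertices.

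For the inductive step, any simple $i$--$j$ path with intermediates in $\{1,\ldots,k\}$ either avoids $k$ -- and is therefore captured by the previous iteration's $L[i,j]$ -- or passes through $k$ and decomposes into $i$--$k$ and $k$--$j$ subpaths with intermediates in $\{1,\ldots,k-1\}$. For the second class, I would show that Node Insertion enumerates precisely the Pareto frontier of concatenations of labels from $L[i,k]$ with labels from $L[k,j]$: since each concatenation contributes $(l_p^1[i,k] + l_q^1[k,j], \max\{l_p^2[i,k], l_q^2[k,j]\})$, the two-pointer sweep that decrements whichever index has the larger $l^2$ is the standard trick for traversing the non-dominated pairs when both inputs are sorted by width. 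This leaves $L'[i,j]$ sorted ascending in $l^2$. Then I would show that Maximize Labels merges the old frontier $L[i,j]$ and the new frontier $L'[i,j]$ into the joint Pareto frontier: its pointer walk appends a label only if its $l^1$ strictly improves on the smallest $l^1$ already encountered at lower $l^2$, which is exactly the criterion for surviving dominance by Definition \ref{def:dominated_labels} after merging two sorted frontiers.

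The main obstacle is the optimal-substructure failure highlighted in Section \ref{sec:features}: one cannot propagate only the single bottleneck-optimal label of a subpath. I would handle this by arguing that maintaining the \emph{entire} Pareto frontier is sufficient, namely that in any bottleneck-optimal $i$--$j$ path passing through $k$, the two halves project onto Pareto-optimal labels of $L[i,k]$ and $L[k,j]$ respectively, so no optimal concatenation is ever lost. A secondary subtlety is that concatenating an $i$--$k$ subpath with a $k$--$j$ subpath may share intermediate vertices and so yield a non-simple path; but such a walk can only have weakly larger $l^1$ and weakly larger $l^2$ than the underlying simple path, so its label is dominated and pruned, leaving the invariant intact. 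Combining the completed induction at $k = n$ with Lemma \ref{lemma_cycles} (bottleneck-optimal paths are simple), $L[i,j]$ contains the Pareto frontier over \emph{all} $i$--$j$ paths, and by Remark \ref{remark:1} the bottleneck distance $d_B(i,j)$ is recovered as $\min_{L_p \in L[i,j]} l_p^1 \cdot l_p^2$, which is what the theorem claims.
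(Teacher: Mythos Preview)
Your proposal is correct and follows essentially the same Floyd--Warshall-style induction on $k$ that the paper's proof uses, together with the same appeal to Lemma~\ref{lemma_cycles} to restrict attention to simple paths. Your treatment is considerably more detailed than the paper's---in particular, you make the Pareto-frontier invariant explicit and argue the correctness of the Node Insertion and Maximize Labels sub-routines, both of which the paper's proof leaves largely implicit.
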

\begin{proof}
Note that the short-and-wide path between any nodes $i$ and $j$ will not contain any cycles, and therefore have at most $n-1$ edges. In the $k$th iteration, we consider adding node $k$ to each edge along the path connecting $i$ and $j$, from each label set at $i$ to each label set at $j$. This algorithm is equivalent to enumerating all possible combinations of the label sets $[i,k]$ and $[k,j]$. If the short-and-wide path between $i$ and $j$ contains node $k$, then the condition $d[i,j] > \max\{l_2[i,k],l_2[k,j]\} \times (l_1[i,k] + l_1[k,j])$ will be violated in iteration $k$, and $k$ is added to one of the edges on the path. If there is an improving or non-dominated label that can be added to a node, it is added even if $k$ is not on the bottleneck path between $i$ and $j$. If the bottleneck path does not contain node $k$, then the condition will not be violated and the current best path and distance are retained, leading to correctness.
\end{proof}

After establishing correctness, we characterize computational complexity in terms of the number of nodes $n$, number of edges $m$, and the number of discrete edge weights $W$.

\begin{theorem}\label{theorem_runtime_fw}		
	The worst-case runtime for Algorithm \ref{bottleneck_floydwarshall} is pseudopolynomial $O(n^3W^2)$.		
\end{theorem}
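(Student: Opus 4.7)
The plan is to bound the work performed in each iteration of the triple-nested loop of Algorithm \ref{bottleneck_floydwarshall} and then multiply by the number of iterations. The outer three loops over $k$, $i$, and $j$ contribute exactly $n^3$ iterations, so establishing an $O(W^2)$ per-iteration bound suffices.

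First I would establish and maintain the invariant that after each iteration, every label set $L[i,j]$ contains at most $W$ non-dominated labels. This is the analogue of Remark \ref{remark:1} for the all-pairs setting, and it follows from the same argument: a label is characterized by the integer hop-count $l^1$ and the bottleneck width $l^2$, and Pareto-incomparability together with the fact that $l^2$ can take at most $W$ distinct values bounds the number of non-dominated labels by $W$. I would verify that the domination-removal built into Maximize Labels (via the $bestL^1$ threshold together with the sorted-by-$l^2$ structure) preserves this invariant at the end of each $(k,i,j)$ iteration, and check that the initial label sets after Initialize trivially satisfy it since each entry holds at most one hop and one weight.

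Given the invariant, each call to Node Insertion combines a list of size at most $W$ with another list of size at most $W$; the pairwise combination of candidate bottleneck-width/hop-count pairs is $O(W^2)$ in the worst case, and the produced list $L'[i,j]$ is of size $O(W)$ after the merge-like coordinated decrement of $p$ and $q$. Maximize Labels then merges $L[i,j]$ and $L'[i,j]$, both sorted by $l^2$ and each of size $O(W)$, and sweeps through maintaining a monotone $bestL^1$; this step is $O(W)$ and therefore fits within the $O(W^2)$ per-iteration envelope. Multiplying by the $n^3$ outer iterations yields the claimed bound $O(n^3 W^2)$. Pseudopolynomiality is then immediate because $W$ depends on the numerical precision or representation of the edge weights rather than only on the combinatorial input size $n$ and $m$.

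The main obstacle is arguing cleanly that the surviving label sets remain free of dominated entries after Maximize Labels, since that subroutine mixes two previously-consolidated lists and its pruning uses the monotone $bestL^1$ threshold rather than an explicit pairwise check against Definition \ref{def:dominated_labels}. I would need to show that traversing the merged list in order of increasing $l^2$ and keeping only strictly decreasing values of $l^1$ is exactly equivalent to discarding all labels dominated in the sense of Definition \ref{def:dominated_labels}, which in turn justifies the $\leq W$ bound entering the next iteration and closes the induction that drives the runtime estimate.
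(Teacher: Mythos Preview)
Your proposal is correct and follows essentially the same approach as the paper: decompose into the $n^3$ iterations of the triple loop, bound the per-iteration cost of the two subroutines by $O(W^2)$ via the label-set size bound from Remark~\ref{remark:1}, and multiply. You are in fact more careful than the paper, which simply asserts the $W$-bound on label sets by citing Remark~\ref{remark:1} without explicitly verifying that \textsc{Maximize Labels} re-establishes it; your identification of that verification as the main obstacle is apt but goes beyond what the paper's own argument supplies.
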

\begin{proof}		
Note that the structure of the algorithm is similar to the Floyd-Warshall algorithm for shortest paths, which is $O(n^3)$ \cite{AhujaMO1993}. Rather than one comparison inside of this, the bottleneck distance algorithm has two sub-algorithms, Algorithm \ref{floydwarshall-insertion} and Algorithm \ref{floydwarshall-max}, to compare all of the label sets. Each sub-algorithm at worst iterates through two label sets in parallel with size $W$ if there are $W$ levels of weights based on Remark \ref{remark:1} (or maximum size $m$). Therefore their runtimes are $O(W^2)$ (or $O(m^2)$ if all edges have different weights). Combining this with the whole algorithm gives the total runtime as $O(n^3W^2)$ (or worst-case of $O(n^3m^2)$).
\end{proof}		

\section{All-Pairs Bottleneck Distance Distribution in Complex Networks}

With efficient algorithms in place, we can compute the all-pairs bottleneck distance distribution for several undirected,  weighted, real-world networks drawn from the Index of Complex Networks \cite{ICON} database, which is publicly available. We choose networks that span a variety of systems, including transportation networks, biological networks, and social networks that may naturally support pipelined flow. Table \ref{tab:networks_characteristics} details the size, type, and sources of each network.  

\begin{table}[htbp]
\caption{Characteristics of Real-World Networks}
    \label{tab:networks_characteristics}
\scriptsize
    \centering
    \begin{tabular}{l l l l l}
    \hline
       Name  & Nodes & Edges & Type & Source\\
    \hline   
       US airports & 1574 & 28236 & Transportation & US airport networks (2010) \cite{ICON}\\
       Mumbai bus routes & 2266 & 3042 & Transportation &India bus routes (2016) \cite{ICON}\\
       Chennai bus routes & 1009 & 1610 & Transportation & India bus routes (2016) \cite{ICON}\\
       Author collaborations & 475 & 625 & Social & Social Networks authors (2008) \cite{ICON}\\
       Free-ranging dogs & 108 & 1296 & Social & Wilson-Aggarwal dogs \cite{ICON}\\
       Game of Thrones & 107 & 353 & Social & Game of Thrones coappearances \cite{ICON}\\
       Resting state fMRI network & 638 & 18625 & Biological & Human brain functional coactivations \cite{ICON}\\
       Human brain coactivation & 638 & 18625 & Biological & Human brain functional coactivations \cite{ICON}\\
    \hline
    \end{tabular}
\end{table}

In Figure \ref{fig:empirical_survival_graphs} we plot the survival functions for the all-to-all  geodesic, weighted, and bottleneck distances $d_G$, $d_W$, and $d_B$. For each network, we find that the inequalities $d_W \leq d_B \leq d_G$ hold, as required by Propositions \ref{prop:order1} and \ref{prop:order2}.  If all edge weights were $1$, all distance metrics are equivalent, i.e., $d_W = d_B = d_G$. For example, note that in the Authors' collaboration network (Figure \ref{fig:authors_collaboration}), the values of $d_B$ and $d_G$ very nearly coincide because the maximum weight along nearly every path is $1$. However, because a significant number of weights are far from $1$, the values of $d_W$ and $d_B$ do not coincide. Additionally, we observe that the geodesic, bottleneck, and weighted distances diverge as the weights are distributed away from $1$.

\begin{figure}[htbp]
    \centering
    \begin{subfigure}[b]{0.38\textwidth}
        \centering
        \includegraphics[width=\textwidth]{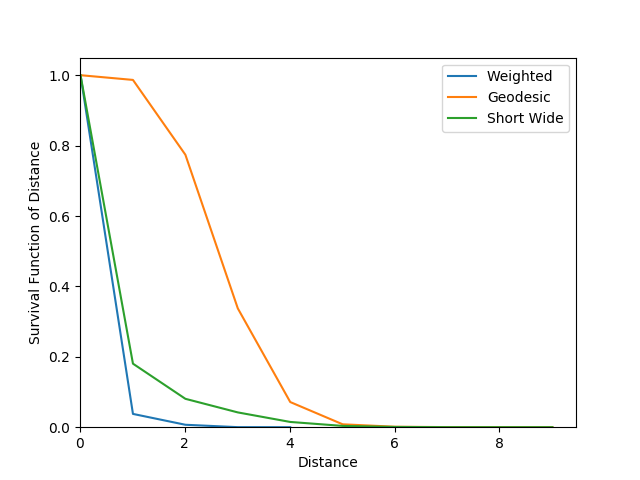}
        \caption{US airports 2010}\label{fig:airport}
    \end{subfigure}%
    \hspace{2.2cm}
    %\hfill
    \begin{subfigure}[b]{0.38\textwidth}
        \centering
        \includegraphics[width=\textwidth]{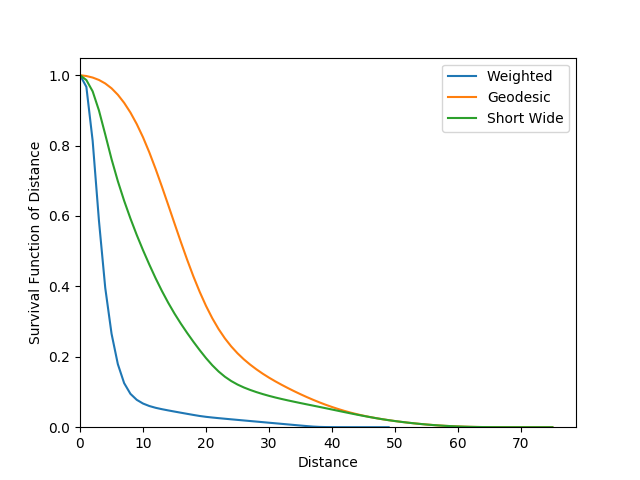}
        \caption{Mumbai bus network}
        \label{fig:Mumbai_bus_network}
    \end{subfigure}
    \hspace{2.2cm}
    %\hfill
    \begin{subfigure}[b]{0.38\textwidth}
        \centering
        \includegraphics[width=\textwidth]{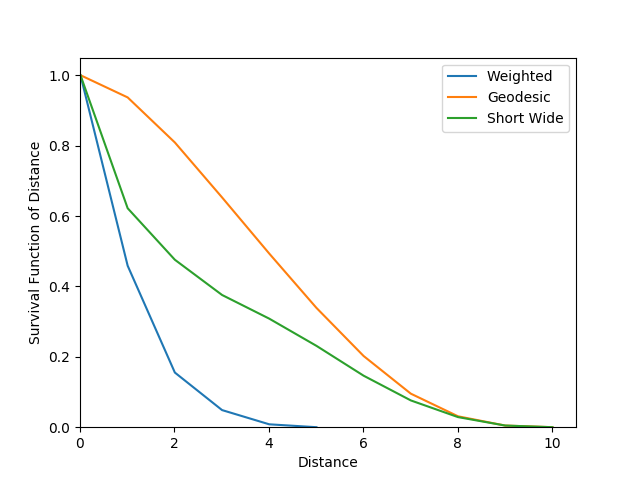}
        \caption{Chennai bus network}
   \label{fig:Chennai_bus_network}
   \end{subfigure}
   \hspace{2.2cm}
   %\hfill
    \begin{subfigure}[b]{0.38\textwidth}
        \centering
        \includegraphics[width=\textwidth]{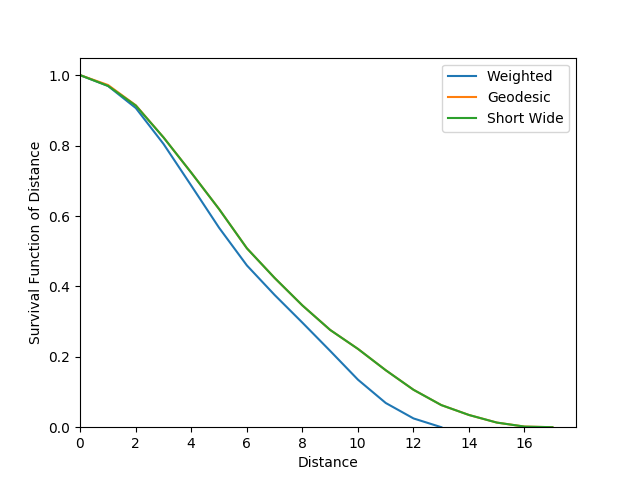}
        \caption{Authors' collaboration network}
        \label{fig:authors_collaboration}
    \end{subfigure}
    \hspace{1.5cm}
    %\hfill
    \begin{subfigure}[b]{0.38\textwidth}
        \centering
        \includegraphics[width=\textwidth]{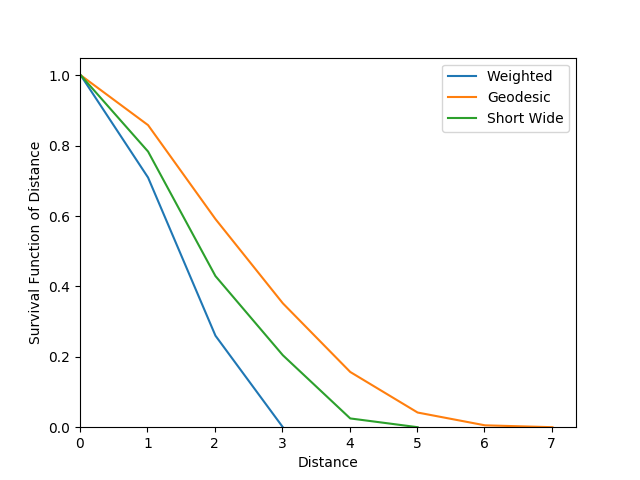}
        \caption{Free ranging dogs social network}
        \label{fig:dogs_social_network_Kakale}
    \end{subfigure}
    \hspace{1.5cm}
    %\hfill
    \begin{subfigure}[b]{0.38\textwidth}
        \centering
        \includegraphics[width=\textwidth]{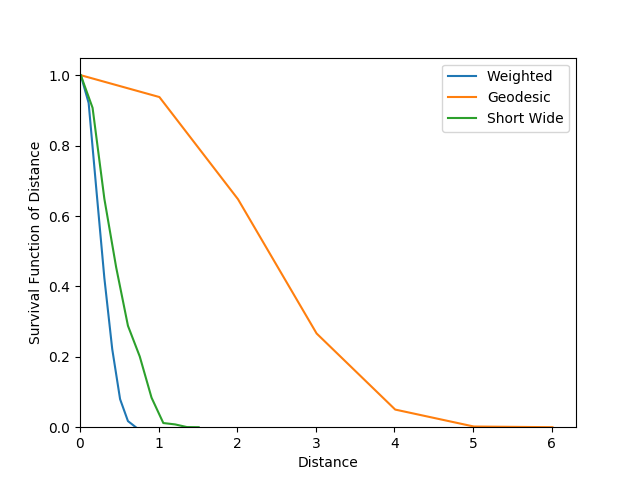}
        \caption{Game of Thrones coappearances}
        \label{fig:GoT_coappearances}
    \end{subfigure}
    \hspace{1.5cm}
    %\hfill
    \begin{subfigure}[b]{0.38\textwidth}
        \centering
        \includegraphics[width=\textwidth]{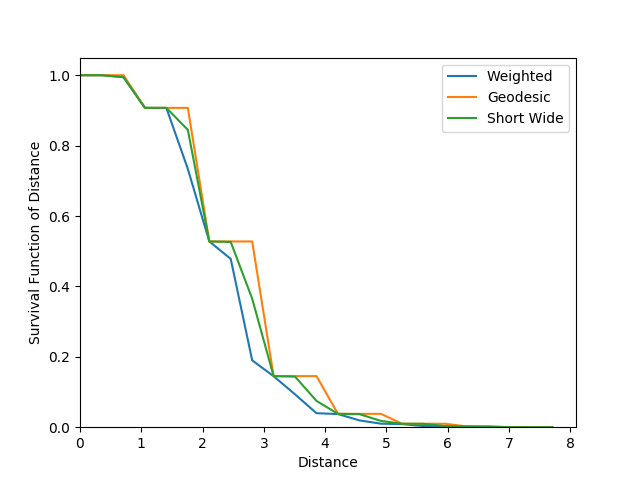}
        \caption{Resting State fMRI network}
    \end{subfigure}
    \hspace{1.5cm}
    %\hfill
    \begin{subfigure}[b]{0.38\textwidth}
        \centering
        \includegraphics[width=\textwidth]{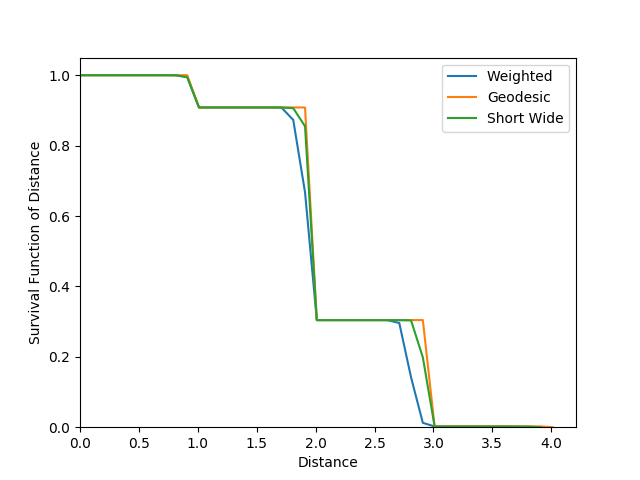}
        \caption{Human brain coactivation network}
    \end{subfigure}
    \caption{Survival function for the empirical all-pairs lengths.}\label{fig:empirical_survival_graphs}
\end{figure}

Previous investigations of the geodesic distance distribution of random graphs had suggested good fits by Weibull, gamma, lognormal, and generalized three-parameter gamma distributions \cite{BauckhageKH2015} as well as basic generative models to explain these distributions.  We consider the same parametric families to understand if short-and-wide path length distributions are well-described by such parametric forms. For each network in Table \ref{tab:networks_characteristics}, we fit the survival functions from Figure \ref{fig:empirical_survival_graphs} and find goodness-of-fit for several different parametric families using the Kolmogorov-Smirnov test.  No single parametric family was best for all networks, but the gamma distribution provided reasonable fits for several networks, as shown in Table \ref{tab:networks_fit_values} which provides the fitted distribution parameters, along with the $\chi^2$ and $p$-values from the test.  As far as we can tell, there is no universality in the bottleneck distance distribution across networks. 

\begin{table}[htbp]
\caption{Gamma Distribution Fits of Bottleneck Distance}
    \label{tab:networks_fit_values}
\scriptsize
    \centering
    \begin{tabular}{l l l l l l}
    \hline
       Name  &  Shape & Location & Scale & $\chi^2$ & $p$-value\\
    \hline   
       US airports  & 0.27 & 0 & 0.64 & 5.80 & 0.63 \\
       Chennai bus routes  & 0.54 & 0 & 0.22 & 1.03 & 0.06 \\
       Mumbai bus routes  & 0.41 & 0 & 0.23 & 21.25 &  0.095 \\
       Author collaborations  & 0.83 & 0 & 0.32 & 8.19 & 0.22 \\
       Free-ranging dogs  & 0.67 & 0 & 0.40 & 3.56 & 0.61 \\
       Game of Thrones  & 0.55 & 0 & 0.18 & 4.71 & 0.008 \\
       Resting state fMRI network  & 0.61 & 0 & 0.19 & 4.91 & 0.23 \\
       Human brain coactivation &  368.63 & -7.47 & 0.02 & 8.756 & 0.85 \\
    \hline
    \end{tabular}
\end{table}

\section{\emph{C.\ elegans} Neuronal Network: Global and Local Flow}

We turn attention specifically to the  \emph{C.\ elegans} gap junction network, to investigate how information flow limits behavior.  A bound on the Shannon capacity [bits/sec] of a single gap junction is developed in the Appendix.  Although there is no reason to believe capacity-achieving codes are used in neural signaling, Shannon capacity provides bounds on the information rate for any signaling scheme.  

The topology of the gap junction network
has been characterized in some detail in our prior work \cite{VarshneyCPHC2011}.  The somatic network consists 
of a giant component comprising $248$ neurons, two small connected components, and several
isolated neurons.  Within the giant component, the average geodesic distance between two neurons
is $4.52$.  Since this characteristic path length is similar to that of a random graph and
since the clustering coefficient is large with respect to a random graph, the network is 
said to be a small-world network.  Moreover, the \emph{C.\ elegans} network overall 
has good expander properties \cite[App.~C]{VarshneyCPHC2010_arxiv}.

Here we compute the effective diameter of the giant component 
of the gap junction network, with respect to the bottleneck distance.  We also find upper and lower bounds.
Figure~\ref{fig:distance_survival} shows the survival function of the empirical all-pairs 
geodesic, weighted, and bottleneck distances.
As can be observed, the effective diameter for bottleneck distance is between $6$ and $7$.  The upper and lower bounds are close to one another since, as shown in Figure~\ref{fig:bottle_width},
the minimax width of paths in the \emph{C.\ elegans} network when ignoring path length is almost always one 
inverse gap junction rather than smaller.  Taking path lengths into account only increases the bottleneck width.

\begin{figure}
  \centering
  \includegraphics[width=3in]{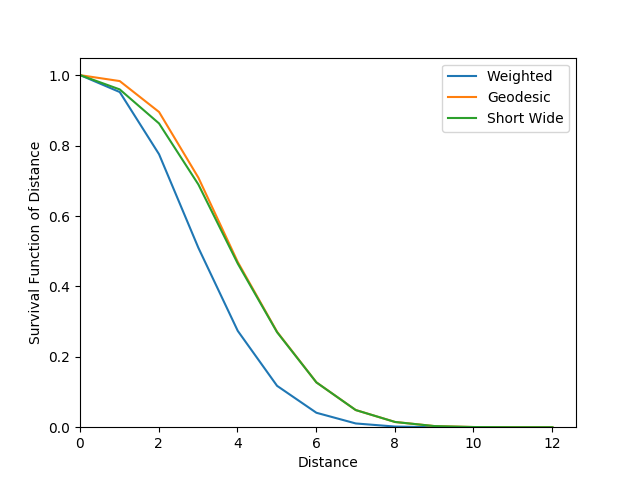}
  \caption{Survival function for the empirical all-pairs distance distributions of the \emph{C.\ elegans} gap junction neuronal 
	network giant component.  The weighted and bottleneck distances are listed in terms of inverse gap junctions.
	%The dashed line indicates the $95$th percentile used to define effective diameter.
	}
  \label{fig:distance_survival}
\end{figure}

\begin{figure}
  \centering
  \includegraphics[width=3in]{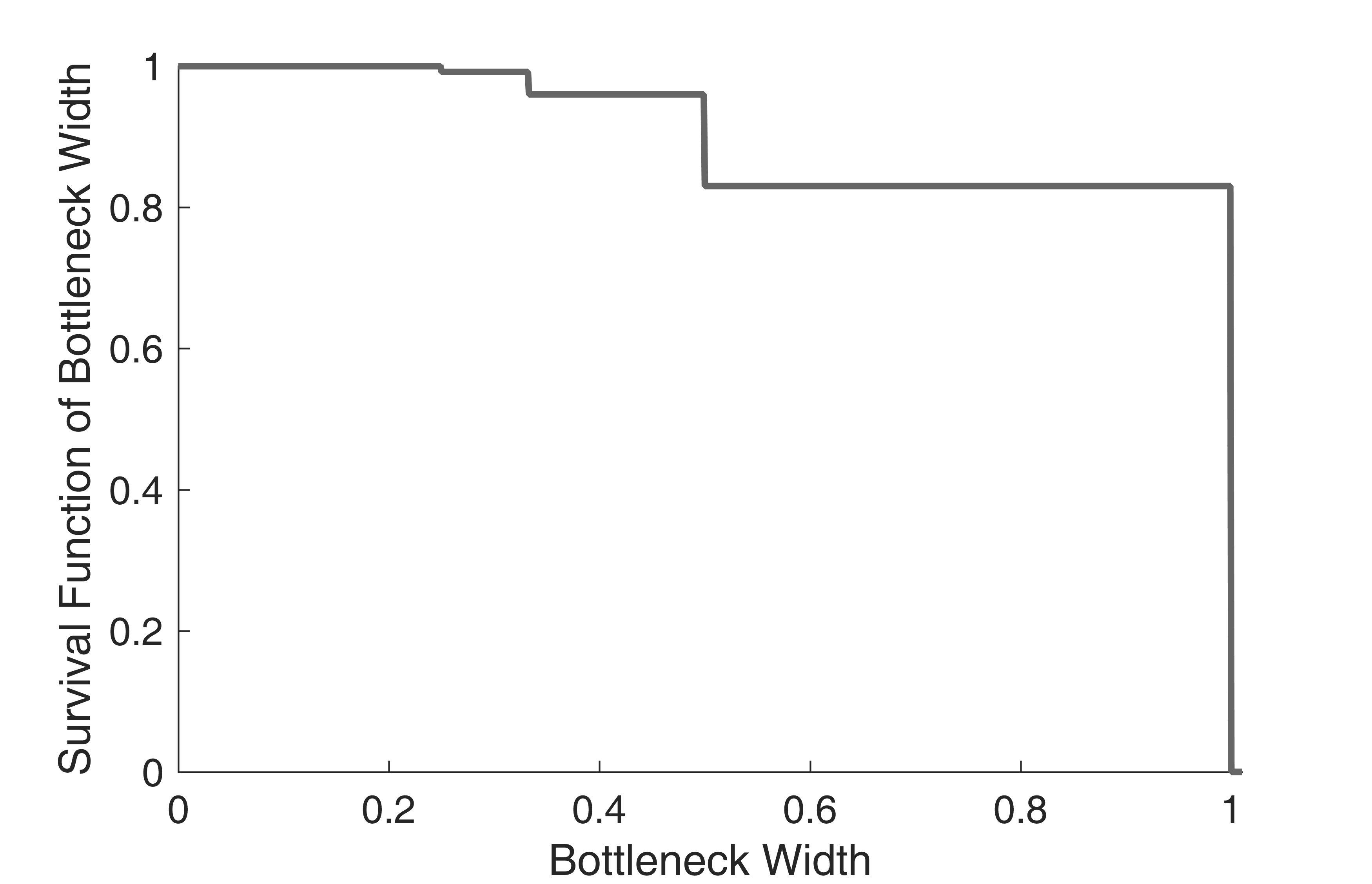}
  \caption{Survival function for the empirical all-pairs bottleneck width of the \emph{C.\ elegans} gap junction neuronal 
	network giant component, without considering path length.  The width is listed in terms of inverse gap junctions.}
  \label{fig:bottle_width}
\end{figure}

\subsection{Limits in Computation Speed}

Having characterized the channel capacity of links and 
the topology of neuronal connectivity, we now develop an information-theoretic model of computation, which in turn yields a limit on computation speed derived from information flow bounds.

Consider the chemosensing problem faced by an organism like \emph{C.\ elegans}.
It has $700$ different types of chemoreceptors \cite{WhittakerS2004} and must take 
behavioral actions based on the chemical properties of its environment 
\cite{ChalasaniCTGRGB2007,MacoskoPFCBCB2009}.  Suppose that differentiating between $700$
chemicals requires $10$ bits of information, which we call the message volume $\log M$.
To perform an action, the neurons must reach consensus among themselves based on the sensory
neuron signals \cite{MacoskoPFCBCB2009}.  This may, in 
principle, need transport of information between all parts of the neuronal 
network. We make the following natural assumption: the consensus
time is bounded by the amount of time it takes to transport $10$ bits of
information across the network (maximum over any two pair of neurons). 

The intuition behind this assumption is the following, which follows from the sparsity of
sensory response in \emph{C.~elegans} \cite{ZaslaverLSGYS2015}. Suppose one part of the 
neuronal network has ``strong'' sensing information about an event in
the environment, e.g.\ worm near a chemical,  but the 
other part has little or no such sensing information. Then effectively
all information is communicated from one part of the network to
the other. Accounting for such instances naturally justifies the
above assumption. 
Note that the actual computational procedures used by \emph{C.\ elegans} may require
several sweeps of signals through the organism, but for bounding purposes, we assume
that one sweep is enough to spread the requisite information.

In effect, the time for transportation of information
across the network and hence to reach consensus is bounded below by 
\begin{align*}
t = \frac{D_e \log M}{C} = \frac{7 \cdot 10}{1700} = 0.041 \mbox{s.}
\end{align*}
This bound uses the geodesic effective diameter and assumes bandwidth $1700$ Hz or equivalently
a refractory period of $1/1.7$ ms (see Appendix).  There is evidence
suggesting that the \emph{C.~elegans} refractory period is likely to be near $1$ ms instead.
In that case, the above bound of $41$ ms would become $70$ ms. 
The bound of $41$ ms or $70$ ms applies to the whole giant component. 

There may be smaller subcircuits within the neuronal network responsible
for specific functional reactions, within the giant component.
If consensus is to be reached only in those functional subcircuits, we
should utilize their diameter in place of effective diameter $7$. As explained in
Section~\ref{ssec:hubspoke}, they may have extremal 
diameters of $2$. Then the information propagation time
would be bounded by $12$ ms (or $20$ ms under $1$ ms refractory
period).

Using circuit-theoretic techniques, the predicted timescale of operation 
of functional circuits was between $20$ ms and $83$ ms \cite{VarshneyCPHC2011}. 
This is clearly an excellent match to the range predicted by our 
bounds: $12$ ms to $41$ ms (or $20$ ms to $70$ ms under the $1$ ms refractory period). This is rather surprising
given that our technique is only attempting to derive 
fundamental lower bounds using anatomical information. 

Now we compare our lower-bound predictions to experimentally observed behavioral times. 
This is a true test of our methods in bounding the propagation time for decision-making information.  The 
behavioral switch times in response to chemical gradients as fast as 
$200$ ms have been observed in the literature \cite[Supp.~Fig.~1]{AlbrechtB2011}.
Since the time required for motor action like turning around must also be taken account---the 
worm can straighten itself in viscous fluids within $6$ to $20$ ms \cite[Fig.~4B]{Fang-YenWXKKCWS2010}---the 
lower-bound results are in agreement.  

Collectively, these agreements with the model calculation bounds 
suggest that information propagation is
likely to be a primary bottleneck in the behavioral decision making
of \emph{C.\ elegans}. 

\subsection{Hub-and-Spoke Architecture}
\label{ssec:hubspoke}

As mentioned earlier, there are smaller subcircuits of the 
neuronal network that are responsible for certain functional
reactions. For such subcircuits, the hub-and-spoke architecture
has optimality properties.  
The basic premise is that the diameter of 
such a subcircuit should be small for computational speed; a hub-and-spoke
network structure provides the smallest possible diameter
of $2$ with the constraint on the number of edges as well
as connectivity requirement.  Formally, we state the following
easy fact. 
\begin{proposition}
Given a connected graph $G$ of $n \geq 3$ nodes and $n-1$ edges, 
the smallest possible diameter is $2$ and is achieved by the 
hub-and-spoke structure. 
\end{proposition}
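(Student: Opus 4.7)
The plan is to observe first that a connected graph on $n$ vertices with exactly $n-1$ edges is a tree, and then handle the lower and upper bounds on the diameter separately. This reduces the proposition to the familiar statement that among all trees on $n \ge 3$ vertices, the star minimizes the diameter.

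For the lower bound, I would argue that no connected graph on $n \ge 3$ nodes with only $n-1$ edges can have diameter $1$. A graph of diameter $1$ must contain an edge between every pair of distinct vertices, hence must have $\binom{n}{2}$ edges; for $n \ge 3$ this exceeds $n-1$, contradicting the edge budget. Since the graph is connected, the diameter is finite and at least $1$, so this forces the diameter to be at least $2$.

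For the upper bound, I would exhibit the hub-and-spoke (star) graph $K_{1,n-1}$ as a witness: pick one distinguished node $h$ (the hub) and connect it to every other node by a single edge. This uses exactly $n-1$ edges, is connected, and any two non-hub nodes $u,v$ are joined by the two-edge path $u-h-v$, while any non-hub node is at distance $1$ from $h$. Hence its diameter equals $2$, matching the lower bound and showing that $2$ is attained.

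Since this argument is entirely elementary, there is no real obstacle; the only thing to be careful about is recording the trivial but essential step that $n-1$ edges on a connected graph already forces a tree structure, so that the ``constraint on the number of edges as well as connectivity requirement'' is correctly translated into the combinatorial bound $\binom{n}{2} > n-1$ used in the lower bound.
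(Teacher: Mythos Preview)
Your proof is correct and essentially identical to the paper's: both note that a connected graph on $n$ vertices with $n-1$ edges is a tree, argue that with $n \ge 3$ some pair of vertices is non-adjacent (you phrase this via the edge count $\binom{n}{2} > n-1$, the paper says it directly) so the diameter is at least $2$, and then exhibit the star as a diameter-$2$ witness.
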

\begin{proof}
Clearly for a connected network with $n \geq 3$ nodes and $n-1$ edges, 
it is a tree (no cycles). Further if there are $n \geq 3$ nodes and $n-1$
edges, there must be a pair of nodes not connected to each other
through an edge. Since the graph is connected, they must be at least
$2$ hops apart. That is, the diameter of such a graph must be at least 
$2$. A hub-and-spoke network, by construction has diameter $2$. 
\end{proof}

Indeed, as shown in Figure~\ref{fig:hubspoke}, certain known functional
subcircuits in the \emph{C.\ elegans} neuronal network do indeed follow
the hub-and-spoke architecture (or nearly so) \cite{MacoskoPFCBCB2009,RabinowitchCS2013}.
Note that other arguments also suggest the benefits to neuronal networks 
of small diameter \cite{KaiserH2006}, but not the optimality
of hub-and-spoke architectures.

\begin{figure}
  \centering
  \includegraphics[width=2.052in]{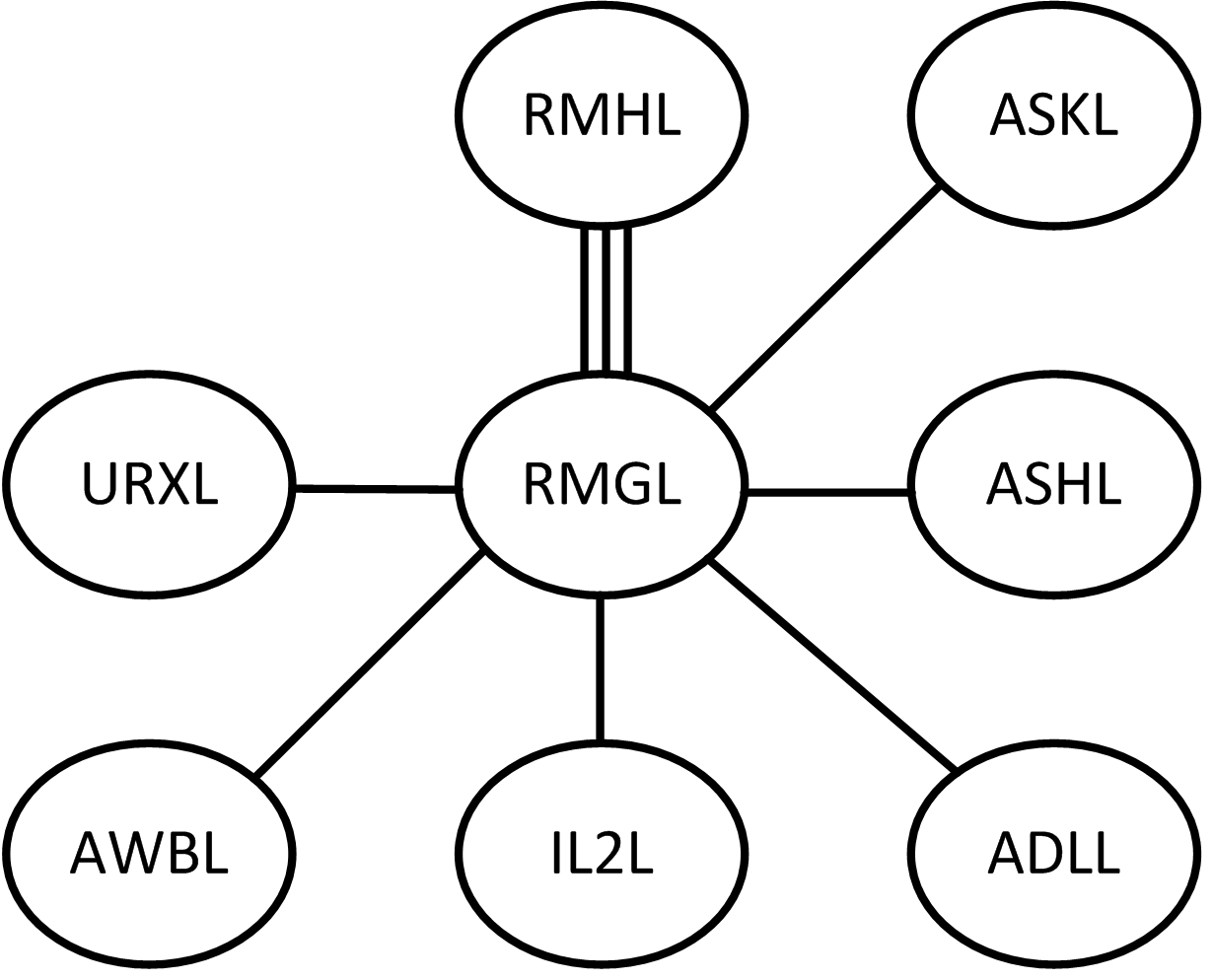} \qquad \includegraphics[width=2.052in]{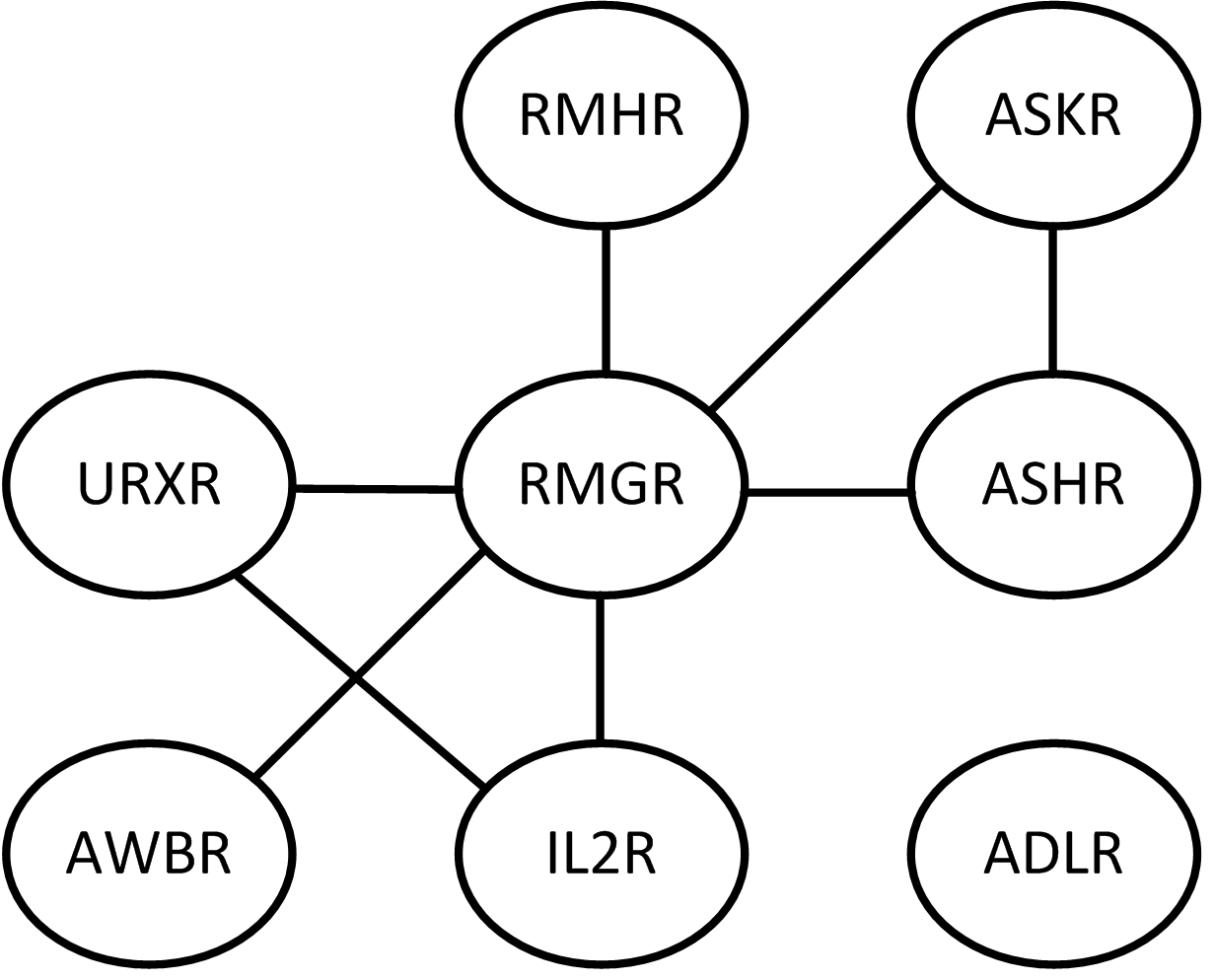} \\ (a) \qquad\qquad\qquad\qquad\qquad\qquad\qquad\qquad\quad (b)
  \caption{Gap junction connectivity of a known chemosensory subcircuit \cite{MacoskoPFCBCB2009}.  The worm is almost
	left-right symmetric; the circuits on both sides are shown. The circuit on the left side of the worm (a) is a 
	hub-and-spoke circuit whereas the circuit on the right side (b) is nearly so.}
  \label{fig:hubspoke}
\end{figure}

\subsection{Comparison to Random Networks}
Now we wish to study whether the bottleneck diameter of the \emph{C.~elegans} gap junction network is more than, less than, 
or similar to the bottleneck diameter of random graphs that have certain other network functionals fixed.

To evaluate the nonrandomness of the bottleneck diameter of the \emph{C.~elegans} network giant component, we compare it 
with the same quantity expected in random networks.  We start with a weighted version of the Erd\"{o}s-R\'{e}nyi random network 
ensemble because it is a basic ensemble.  Constructing the topology requires a single parameter, the probability of a connection 
between two neurons. There are $514$ gap junction connections over $279$ somatic neurons in \emph{C.~elegans}, and so we 
choose the probability of connection as $0.0133 = 2\times 514/279/278$.  After fixing the topology, we choose the multiplicity
of the connections by sampling randomly according to the \emph{C.~elegans} multiplicity distribution \cite[Fig.~3(B)]{VarshneyCPHC2011},
which is well-modeled as a power-law with parameter $2.76$.  Note that in general the giant component for such a construction will
be much larger than that of \emph{C.~elegans}.

Figure \ref{fig:distance_survivalER} shows the survival function of the empirical all-pairs geodesic, weighted, and bottleneck distances of one hundred random networks.  A random example is highlighted.  As can be observed,
the effective diameter bottleneck distance is roughly $6$, significantly less than that for the \emph{C.~elegans} network.

Now we consider a degree-matched weighted ensemble of random networks.  In such a random network, the degree distribution 
matches the degree distribution of the gap junction network; the degree of a neuron is the number of neurons with which it 
makes a gap junction. Such a random ensemble is created using a numerical rewiring procedure to generate samples \cite{MaslovS2002,ReiglAC2004}.
Upon fixing the topology, the multiplicity of connections is sampled as for the Erd\"{o}s-R\'{e}nyi ensemble.
Note that in general the giant component for such a construction will be much larger than that of \emph{C.~elegans}.

Figure \ref{fig:distance_survivalDM} shows the survival function of the empirical all-pairs geodesic, weighted, and bottleneck distances of one hundred random networks.  A random example is highlighted.  As can be observed,
the effective diameter for bottleneck distance is just below $5$, quite significantly less than that for the \emph{C.~elegans} network.
Comparing Figures \ref{fig:distance_survivalER} and \ref{fig:distance_survivalDM} to Figure~\ref{fig:distance_survival}, note that 
the results hold for many defining thresholds for effective diameter, not just $0.95$.

%\begin{figure}
%  \centering
%  \includegraphics[width=3in]{}
%  \caption{Survival function for the empirical all-pairs distance distributions of $100$ Erd\"{o}s-R\'{e}nyi random network giant components;
%	a random example is highlighted in red.  
%	The weighted distance is listed in terms of inverse gap junctions.
%	The dashed line indicates the $95$th percentile used to define effective diameter.}
%  \label{fig:distance_survivalER}
%\end{figure}

%\begin{figure}
  %\centering
  %\includegraphics[width=3in]{}
  %\caption{Survival function for the empirical all-pairs distance distributions of $100$ degree-matched random network giant components;
%	a random example is highlighted in red.
%	The weighted distance is listed in terms of inverse gap junctions.
%	The dashed line indicates the $95$th percentile used to define effective diameter.}
%  \label{fig:distance_survivalDM}
%\end{figure}

\begin{figure}
  \centering
  \includegraphics[width=3in]{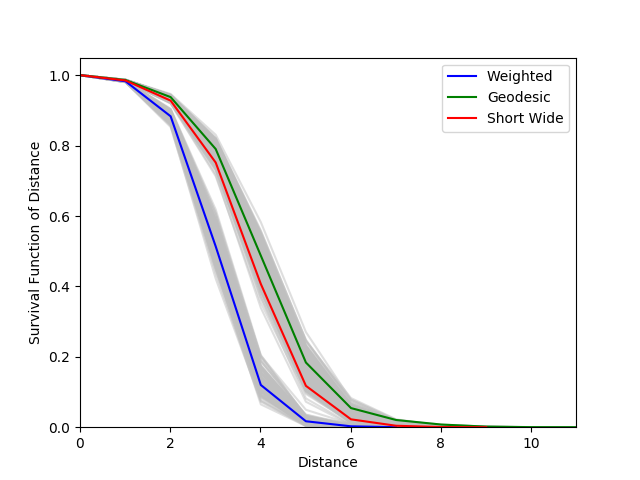}
  \caption{Survival function for the empirical all-pairs distance distributions of $100$ Erd\"{o}s-R\'{e}nyi random network giant components;
	a random example is highlighted in colored lines.  
	The weighted and bottleneck distances are in terms of inverse gap junctions.}
  \label{fig:distance_survivalER}
\end{figure}

\begin{figure}
  \centering
  \includegraphics[width=3in]{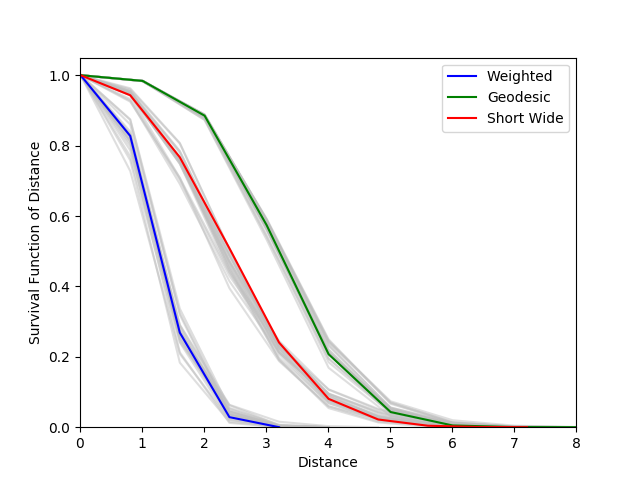}
  \caption{Survival function for the empirical all-pairs distance distributions of $100$ degree-matched random network giant components;
	a random example is highlighted in colored lines.
	The weighted and bottleneck distances are in terms of inverse gap junctions.}
  \label{fig:distance_survivalDM}
\end{figure}

These results reveal a key nonrandom feature in synaptic connectivity of the \emph{C.~elegans} gap junction network,
but perhaps contrary to expectation.  The network has a non-randomly \emph{worse} bottleneck diameter compared
to basic random graph ensembles.  It enables globally \emph{slower} behavioral speed than similar random networks. 
In contrast, Section~\ref{ssec:hubspoke} had found that at the micro-level of small functional sub-circuits, the \emph{C.~elegans} gap junction
network has several hub-and-spoke structures \cite{MacoskoPFCBCB2009,RabinowitchCS2013}, which are actually optimal 
from an information flow perspective.  Thus, these results lend greater nuance to efficient flow hypotheses in neuroscience.

\section{Conclusion}
We have modeled pipelined network flow, as arises in several biological, social, and technological networks and developed algorithms to efficiently find short-and-wide paths that optimize this novel network flow.  This new model of flow also specifically provides a new approach  to understand the limits of information propagation and behavioral speed supported by neuronal networks. Within the field of network information theory itself, it is of interest to study the novel notion of bottleneck distance in
detail from a theoretical perspective.  Since we did not universal scaling laws across several real-world networks, it is also of interest to analytically characterize its distribution in random ensembles such as Watts-Strogatz small worlds, Barab\'{a}si-Albert scale-free 
networks, Kronecker random graphs, or random geometric graphs.

Beyond our general study, we also specifically considered circuit neuroscience and connectomics, where the overarching goal is try to understand how an animal's 
behavior arises from the operations of its nervous system.  The nervous system
must transport information from one part to another, whether engaged in communication, computation, control, or maintenance.
This paper proposes a way to characterize information flow through the nervous system from 
detailed properties of anatomical connectivity data and to use this characterization to make lower bound statements on
the behavioral timescales of animals.

The efficacy of these techniques was demonstrated by explaining 
the communication bottlenecks in the gap junction network of the nematode \emph{C.\ elegans}.
Remarkably, the timescale lower bounds are predictive of behavioral timescales.

In considering the possibility of changing the network topology itself, we discovered that
the network has much worse bottleneck distance than similar random graphs (whether Erd\"{o}s-R\'{e}nyi or degree-matched). 
The network does not seem to be optimized for global information flow.
On the other hand, we noted the prominence of hub-and-spoke functional subcircuits in the \emph{C.~elegans} gap junction 
network and proved their optimality for information flow under number
of gap junctions constraints.
In terms of neural organization, this suggests that smaller subcircuits within the larger neuronal network
are responsible for specific functions, and these should have fast information flow (to quickly achieve the 
computational objective of that circuit, such as chemotaxis).  Behavioral speed of the global network may not be
biologically relevant.  

As more and more connectomes are uncovered and more details of the biophysical properties of 
synapses are determined, these information flow techniques may provide a general
methodology to understand how physical constraints lead to informational and thereby behavioral limits
in nervous systems.

\vskip6pt

\enlargethispage{20pt}

\subsubsection*{Authors' Contributions}
LM developed and implemented efficient algorithms for computing bottleneck distance, proved their correctness and characterized their complexity, participated in the design of the study, and carried out analysis; LRV conceived of the study, designed the study, coordinated the study, proved certain results, participated in data analysis, and drafted the manuscript; DS conceived of the pipelining model of network flow and proved certain theoretical results; NAP and MEG implemented algorithms and participated in data analysis. All authors gave final approval for publication and agree to be held accountable for the work performed therein.

\bibliographystyle{IEEEtran}
\bibliography{abrv,conf_abrv,neural}

\section*{Appendix: Gap Junction Links}
Although nearly all information-theoretic investigation of synaptic transmission has focused on electrochemical 
synapses \cite{RiekeWRB1997,BorstT1999}, purely electrical gap junctions are also rather ubiquitous, not only 
in \emph{C.\ elegans} but also in the mammalian brain \cite{ConnorsL2004}.
Among the $282$ somatic neurons in the \emph{C.\ elegans} connectome, there are $890$
gap junctions \cite{VarshneyCPHC2011}. 
As such, it is important to provide a mathematical model 
of signal flow through gap junctions, along with the noise that perturbs signals.

\subsection{Thermal Noise}
A gap junction is a hollow protein that allows electrical current to flow between cells, and thereby allows signaling in 
both directions.  In previous modeling efforts, it has been determined that gap junctions can essentially just be modeled as 
resistors with thermal noise that is additive white Gaussian (AWGN) \cite{FerreeL1999}.  Noise due to stochastic
chemical effects or due to random background synaptic activity \cite{ManwaniK1999} need not be considered.

The electrical conductance of a \emph{C.\ elegans} gap junction is $200$ pS \cite{VarshneyCPHC2011}; a resistor with 
conductance $200$ pS corresponds to a resistance of $5000$ M$\Omega$.  In order to compute the root mean square (RMS) 
thermal noise voltage $v_n$, we use the Johnson-Nyquist formula \cite{Johnson1928,Nyquist1928}:
\begin{equation}
v_n = \sqrt{4k_B T R \Delta f} \mbox{,}
\end{equation}
where $k_B$ is Boltzmann's constant $1.38 \times 10^{-23}$ J/K, $T$ is the absolute temperature in K, $R$ is the resistance in 
$\Omega$, and $\Delta f$ is the bandwidth in Hz over which the noise is measured.  We assume room temperature ($298$ K) 
and for reasons that will become evident in the sequel, we take the bandwidth to be $1700$ Hz.  Then
\begin{align}
v_n &= \sqrt{4 \cdot 1.38 \times 10^{-23} \cdot 298 \cdot 5 \times 10^{9} \cdot 1700} \\ \notag
&= 3.74 \times 10^{-4} \mbox{ V.} 
\end{align}
We consider the thermal noise RMS voltage of a \emph{C.\ elegans} gap junction to be this $0.374$ mV value.  

\subsection{Plateau Potentials}
Although one might hope that the signaling scheme that can be used over a gap junction is unrestricted,
there are biophysical constraints that impose limits.  We describe these signaling limitations
for \emph{C.\ elegans}.

Signaling arises from regenerative events in neurons.  
Electrophysiologists have described four main types of regenerative events: action potentials, graded 
potentials, intrinsic oscillations, and plateau potentials.  Plateau potentials are prolonged all-or-none 
depolarizations that can be triggered and terminated by brief positive- and negative-current pulses, respectively \cite{LockeryG2009}.
Plateau potentials are the biological equivalent of Schmitt triggers \cite{LockeryG2009}.

It is thought that plateau potentials may be used by many neurons in \emph{C.\ elegans},
and that they may arise through synaptic interaction \cite{LockeryG2009}.
RMD neurons in \emph{C.\ elegans} have two stable resting potentials, one near $-70$ mV and one near $-35$ mV \cite{LockeryG2009},
and these values are thought to hold across the nervous system.
These two levels can be thought of as the two possible input levels to a gap junction channel:
\begin{align}
v_0 &= -70 \times 10^{-3} \mbox{ V,}\\
v_1 &= -35 \times 10^{-3} \mbox{ V,}
\end{align}
as part of a random telegraph signal.

Switching between the two levels cannot happen arbitrarily quickly, as there is an absolute refractory
period between regenerative events due to biochemical constraints.  Unfortunately, the absolute refractory 
period for \emph{C.\ elegans} is not known due to the difficulty in performing the requisite electrophysiology 
experiments [email communication with S.~R.\ Lockery (14 Dec.\ 2010) and later confirmation].

The fastest impulse potentials observed in neurons in common reference are, perhaps, in the Renshaw cells of the mammalian spinal motor system, and have been reported as high as $1700$ per second \cite[p.~47]{AchacosoY1992}. Moving forward, we use this as a bound, however the typical value for the absolute refractory period across the animal kingdom is $1$ ms; \emph{C.\ elegans} may be even slower.  This is where the noise bandwidth value of $1700$ Hz also comes from.

\subsection{Capacity}
Having determined the noise distribution and the signaling constraints, we aim to find the capacity of this continuous-time, intertransition-time-restricted, binary-input AWGN channel.  This capacity computation problem has been studied by Chayat and Shamai \cite{ChayatS1999}, assuming no timing jitter.
Rather than using those precise results, we take a simplified approach through the discretization of time.
We assume slotted pulse amplitude modulation, which is robust to any timing jitter that may be present in 
\emph{C.\ elegans} signal propagation.  

In particular, we consider a discrete-time channel with 1700 channel usages per second,
two binary input levels of $-35$ and $-70$, and AWGN noise with standard deviation $0.374$.
As can be noted, the signal-to-noise ratio is rather high, $17.5^2 / 0.374^2 = 2.2 \times 10^3$, 
and so the capacity will be approximately one bit per channel usage.

Just to be sure, we compute this more precisely.  The capacity is 
\begin{equation}
C = h(Y) - \tfrac{1}{2} \log 2 \pi e \mbox{,}
\end{equation}
where $h(Y)$ is the differential entropy of the distribution:
\begin{align}
p(y) = \frac{1}{2}\left[\frac{1}{\sqrt{2\pi}}\exp\left\{\frac{-(y - \sqrt{\rm{SNR}})^2}{2}\right\}\right] 
 + \frac{1}{2}\left[\frac{1}{\sqrt{2\pi}}\exp\left\{\frac{-(y + \sqrt{\rm{SNR}})^2}{2}\right\}\right] \mbox{,}
\end{align}
with $\rm{SNR} =  2.2 \times 10^3$, see e.g.~\cite{VarshneySC2006}.
Performing the calculation demonstrates the rate loss below $1$ bit per channel usage due to noise is negligible. Thus we assume that the capacity of a \emph{C.\ elegans} gap junction is $1700$ bits per second,
or equivalently $5.9 \times 10^{-4}$ seconds per bit. 

Synaptic connection between two neurons may contain more than one gap junction.\footnote{The mean number of gap junctions between two connected neurons is $1.73$; 
see \cite[Fig.~3(b)]{VarshneyCPHC2011} for the distribution, which is well-modeled by a power law with exponent $2.76$.}  Although it is difficult to maintain electrical separation between individual gap junctions, for the purposes of this paper we assume that each gap junction can act independently.  Hence the channel capacity of parallel gap junction links is simply taken to be the number of gap junctions between the two neurons multiplied by the capacity of an individual gap junction.

\end{document}